\DeclareMathOperator*{\argmin}{arg\,min}
\newtheorem{definition}{Definition}
\newtheorem{theorem}{Theorem}
\newtheorem*{theorem*}{Theorem}
\newtheorem{lemma}{Lemma}
\newtheorem*{lemma*}{Lemma}
\title{Neurosymbolic Reinforcement Learning with Formally Verified Exploration}
\author{%
 Greg Anderson \\
 UT Austin \\
  \texttt{\small ganderso@cs.utexas.edu} \\
\And
Abhinav Verma \\
UT Austin \\
\texttt{\small verma@utexas.edu}
\And
Isil Dillig \\
UT Austin \\
\texttt{\small isil@cs.utexas.edu}
\And
Swarat Chaudhuri \\
UT Austin \\
\texttt{\small swarat@cs.utexas.edu}
}
\begin{document}

\maketitle

\newcommand{\toolname}{\textsc{Revel}\xspace}
\newcommand{\propel}{\textsc{Propel}\xspace}
\newcommand{\todo}[1]{{\color{red}#1}}
\newcommand{\G}{\mathcal{G}}

\newcommand{\cpo}{\textsc{Cpo}\xspace}
\newcommand{\ddpg}{\textsc{Ddpg}\xspace}
\newcommand{\daggeralgo}{\textsc{Dagger}\xspace}

\definecolor{mygreen}{rgb}{0,0.4,0}
\newcommand{\comment}[1]{\textit{\textcolor{mygreen}{//#1}}}

\newcommand{\states}{\mathcal{S}}
\newcommand{\actions}{\mathcal{A}}
\newcommand{\supp}{\operatorname{supp}}
\newcommand{\reach}{\operatorname{reach}}
\newcommand{\proj}{\operatorname{proj}}

\newcommand{\rl}{RL\xspace}

\newcommand{\FF}{\mathcal{F}}
\newcommand{\HH}{\mathcal{H}}
\newcommand{\NN}{\mathcal{N}}
\newcommand{\ifc}{\mathbf{if}}
\newcommand{\thenc}{\mathbf{then}}
\newcommand{\elsec}{\mathbf{else}}

\newcommand{\method}{\textsc{Snapper}\xspace}
\newcommand{\F}{\mathcal{F}}

\newcommand{\update}{\textsc{Update}}
\newcommand{\lift}{\textsc{Lift}}
\newcommand{\project}{\textsc{Project}}

\begin{abstract}
We present \toolname, a partially neural reinforcement learning (\rl) framework for provably  safe exploration in continuous state and action spaces. A key challenge for provably safe deep RL is that repeatedly verifying neural networks within a learning loop is computationally infeasible. We address this challenge using two policy classes: a general, neurosymbolic class with approximate gradients and a more restricted class of symbolic policies that allows efficient verification. Our learning algorithm is a mirror descent over policies: in each iteration, it safely lifts a symbolic policy into the neurosymbolic space, performs safe gradient updates to the resulting policy, and projects the updated policy into the safe symbolic subset, all without requiring explicit verification of neural networks. 
Our empirical results show that \toolname enforces safe exploration in many scenarios in which Constrained Policy Optimization does not, and that it can discover policies that outperform those learned through prior approaches to verified exploration. 
\end{abstract}

\section{Introduction}
\label{sec:Intro}

Guaranteeing that an agent behaves {safely} during exploration is a fundamental problem in reinforcement learning (\rl)~\cite{garcia2015comprehensive,cpo_main}. Most approaches to the problem are based on stochastic definitions of safety~\cite{MoldovanA12,chow2018lyapunov,cpo_main,chow2018lyapunov}, requiring the agent to satisfy a safety constraint with high probability or in expectation. However, in applications such as autonomous robotics, unsafe agent actions --- no matter how improbable --- can lead to cascading failures with high human and financial costs. As a result, it can be important to ensure that the agent behaves safely {\em even on worst-case inputs}.

A number of recent efforts~\cite{AlshiekhBEKNT18,fulton2018safe} use {formal methods} to offer such worst-case guarantees during exploration. Broadly, these methods 
construct a space of provably safe policies \emph{before} the learning process starts. 
Then, during exploration, a {\em safety monitor} observes the learner, 
forbidding all actions that cannot result from one of these safe policies. If the learner is about to take a forbidden action, a safe policy (a {\em safety shield}) is executed instead.

So far, these methods have only been used to discover policies over simple, finite action spaces. Using them in more complex settings --- in particular, continuous action spaces --- is much more challenging. A key problem is that these safety monitors  are constructed a priori and are blind to the internal state of the learner. As we experimentally show later in this paper, such a ``one-size-fits-all'' strategy can unnecessarily limit exploration and impede learner performance. 


In this paper, we improve this state of the art through an \rl framework, called \toolname \footnote{\toolname stands for {\bf Re}inforcement learning with {\bf ve}rified exp{\bf l}oration. The current implementation is available at \url{https://github.com/gavlegoat/safe-learning}.} that allows learning over continuous state and action spaces, supports (partially) neural policy representations and contemporary policy gradient methods for learning, while also ensuring that every intermediate policy that the learner constructs during exploration is safe on worst-case inputs. Like previous efforts, \toolname uses monitoring and shielding. However, unlike in prior work, the monitor and the shield are updated 
as learning progresses.   




A key feature of our approach is that we repeatedly invoke a formal verifier from within the learning loop. Doing this is challenging because of the high computational cost of verifying neural networks. 
We overcome this challenge using a neurosymbolic policy representation in which the shield and the  monitor are expressed in an easily-verifiable symbolic form, whereas the normal-mode policy is given by a neural network. 
Overall, this representation admits efficient gradient-based learning as well as efficient updates to both the shield and monitor. 

To learn such neurosymbolic policies,  we build on \propel~\cite{propel}, a recent \rl framework in which policies are represented in compact symbolic forms (albeit without consideration of safety), and design a learning algorithm that performs a functional mirror descent in the space of  neurosymbolic policies. The algorithm views the set of shields as being obtained by imposing a constraint on the general policy space. 
Starting with a safe but suboptimal shield, it alternates between: (i) safely {\em lifting} the current shield into the unconstrained policy space by adding a neural component; (ii) safely {\em updating} this neurosymbolic policy using approximate gradients; and (iii) using a form of imitation learning 
to {\em project} the updated policy back into the constrained space of shields. Importantly, none of these steps requires direct verification of neural networks.

Our empirical evaluation, on a suite of continuous control problems, shows that \toolname enforces safe exploration in many scenarios where established \rl algorithms (including \cpo~\cite{cpo_main}, which is motivated by safe \rl) do not, 
while discovering policies that 
outperform policies based on static shields. 
Also, building on results for \propel, we develop a theoretical analysis of \toolname.

In summary, the contributions of the paper are threefold. First, we introduce the first \rl approach to use deep policy representations and policy gradient methods while guaranteeing formally verified exploration. Second, we propose a  new solution to this problem that combines ideas from \rl and formal methods, 
and we show that our method has convergence guarantees. Third, we present promising experimental results for our method in the continuous control setting. 

\renewcommand{\L}{\mathcal{L}}
\newcommand{\Safe}{\mathit{Safe}}

\section{Preliminaries}
\label{sec:Problem}


\paragraph{Safe Exploration.} We formulate our problem in terms of a Markov Decision Process (MDP) that has the standard 
probabilistic dynamics, as well as a 
worst-case dynamics that is used for verification. Formally, such an MDP is a structure $M = (\states, \actions, P, c, \gamma, p_0, P^\#, \states_U)$. Here, $\states$ is a set of environment states; $\actions$ is a set of agent actions; 
$P(s' \mid s, a)$ is a probabilistic transition function;
$c : \states \times \actions \to \mathbb{R}$ is a state-action cost function; 
$0 < \gamma < 1$ is a discount factor; 
$p_0(s)$ is an initial state distribution with support $\states_0$; 
$P^\#: \states \times \actions \rightarrow 2^\states$ is a deterministic function that defines {\em worst-case bounds} on the environment behavior; and $\states_U$ is a designated set of {\em unsafe states} that the learner must always avoid.
Because our focus is on continuous domains, we assume that $\states$ and $\actions$ are real vector spaces. The function $P^\#$ is assumed to be available in closed form to the learner; because it  captures worst-case dynamics, we require that $\supp(P(s' \mid s, a)) \subseteq P^\#(s, a)$ for all $s, a$. In general, the method for obtaining $P^\#$ will depend on the problem under consideration. In Section~\ref{sec:experiments} we explain how we generate these worst case bounds for our experimental environments.

A \emph{policy} for $M$ is a (stochastic) map $\pi : \states \to \actions$ that determines which action the agent should take in a given state. Each policy $\pi$ induces a probability distribution on the cost $c_i$ at each time step $i$. 
The aggregate cost of a policy $\pi$ is $J(\pi) = \mathrm{E}[\sum_i \gamma^i c_i]$, where $c_i$ is the cost at the $i$-th time step. 

For a set $S \subseteq \states$, we define the set of states $\reach_i(\pi, S)$ that are reachable from $S$ in $i$ steps under worst-case dynamics:

\vspace{-0.5cm}
$$
\begin{array}{l}
\reach_1(\pi, S) =  \bigcup_{s \in S, a \in \supp(\pi( \cdot \mid s))} P^\#(s, a) \qquad \qquad 
\reach_{i+1}(\pi, S) = \reach_1(\pi, \reach_{i}(\pi, S)).
\end{array}
$$
The policy $\pi$ is \emph{safe} if $(\bigcup_i \reach_i(\pi, \states_0)) \cap \states_U = \varnothing$. If $\pi$ is safe, we write $\Safe(\pi)$.


We define a {\em learning process} as a sequence of policies $\L = \pi_0,\pi_1,\dots, \pi_m$. We assume that the initial policy $\pi_0$ in this sequence is worst-case safe. Our algorithmic objective is to discover a learning process $\L$ such that the final policy $\pi_m$ is safe and optimal, and every intermediate policy is safe:
\begin{equation}
\pi_m = \argmin_{\pi \textrm{~~s.t.~~} \Safe(\pi)} J(\pi) 
\label{eq:opt}
\end{equation}
\begin{equation}
\forall 0 \le i \le m: \Safe(\pi_i).
\end{equation}

\paragraph{Formal Verification.} 
Our learning algorithm relies on an oracle for formal verification of policies. Given a policy $\pi$, such a verifier tries to construct an inductive proof of the property $\Safe(\pi)$. Such a proof takes the form of an {\em inductive invariant}, defined as a set of states $\phi$ such that: (i) $\phi$ includes the initial states, i.e., $\states_0 \subseteq \phi$; (ii) $\phi$ is closed under the worst-case transition relation, i.e., $\reach_1(\phi) \subseteq \phi$; and (iii) $\phi$ does not overlap with the unsafe states, i.e., $\phi \cap \states_U = \emptyset$. Intuitively, states $s$ in $\phi$ are such that even under worst-case dynamics, MDP trajectories from $s$ can never encounter an unsafe state. 
We use the notation  $\phi \vdash \pi$ to indicate that policy $\pi$ can be proven safe using inductive invariant $\phi$.

Inductive invariants can be constructed in many ways. Our implementation uses {\em abstract interpretation}~\cite{cousot1977}, which maintains some \emph{abstract} state that approximates the \emph{concrete} states which the system can reach. For example, the abstract state might be a hyperinterval in the state space of the program that defines independent bounds on each state variable. Critically, this abstract state is an \emph{overapproximation}, meaning that while the abstract state may include states which are not actually reachable, it will always include \emph{at least} every reachable state. By starting with an abstraction of the initial states and using abstract interpretation to propagate this abstract state through the environment transitions and the policy, we can obtain an abstract state which includes all of the reachable states of the system (that is, we compute approximations of $\reach_i(\states_0)$ for increasing $i$). Then if this  abstract state does not include any unsafe states, we can be sure that none of the unsafe states are reachable by any concrete trajectory of the system either.
\section{Learning Algorithm}
\label{sec:algorithm}


\begin{algorithm}[t]
	\caption{Reinforcement Learning with Formally Verified Exploration (\toolname)}
	\label{alg:algo}
	\begin{small}
	\begin{algorithmic}[1]
		\STATE  {\bfseries Input:} Symbolic Policy Class $\G$ \& Neural Policy Class $\F$. 
		\STATE  {\bfseries Input:} Initial $g_0 \in \G$, with the guarantee $\phi_0 \vdash \Safe(g_0)$ for some $\phi_0$ 
		\STATE Define neurosymbolic policy class $\HH = \{ h(s) \equiv \ifc~P^\#(s, f(s)) \subseteq \phi~\thenc~f(s) ~\elsec~g(s)\}$
		\FOR{$t = 1, \ldots, T$}
    		\STATE $h_{t} \leftarrow \lift_\HH(g_t, \phi_t)$ \ \ \ \ \ \ \ \ \ \comment{lifting the new symbolic policy and proof into the blended space}
		    \STATE $h_t \leftarrow \update_\F(h_t, \eta)$ \ \ \ \ \ \ \ \ \ \comment{policy gradient in neural policy space with learning rate $\eta$}
		    \STATE $(g_{t + 1}, \phi_{t + 1}) \leftarrow \project_\Pi(h_t)$ \ \ \ \ \comment{synthesis of safe symbolic policy and corresponding invariant}
		\ENDFOR
		\STATE \textbf{Return:} Policy $h_T$
	\end{algorithmic}
	\end{small}
\end{algorithm}

Our learning method is a functional mirror descent in policy space, based on approximate gradients, similar to \propel~\cite{propel}. The algorithm relies on two policy classes $\G$ and $\HH$, with $\G \subseteq \HH$.

The class $\G$ comprises the policies that we use as shields. These policies are safe and can be efficiently certified as such. Because automatic verification works better on functions that belong to certain restricted classes and are represented in compact, symbolic forms, we assume some syntactic restrictions on our shields. The specific restrictions depend on the power of the verification oracle; we describe the choices made in our implementation in Section~\ref{sec:piecewise-linear}.

The larger class $\HH$ consists of neurosymbolic policies. Let $\F$ be a predefined class of neural policies. We assume that each shield in $\G$ can also be expressed as a policy in $\F$, i.e., $\G \subseteq \F$. Policies $h \in \HH$ are of the form: 

\vspace{-0.5cm}
$$
h(s) = \ifc~(P^\#(s, f(s)) \subseteq \phi)~\thenc~f(s)~\elsec~g(s)
$$

where $g \in \G$, $f \in \F$, and $\phi$ is an inductive invariant that establishes $\Safe(g)$. We commonly denote a policy $h$ as above by the notation $(g, \phi, f)$. 
    
The ``true'' branch in the definition of $h$ represents the normal mode of the policy. The condition  $P^\#(s, f(s)) \subseteq \phi$ is the safety monitor. If this condition holds, then the action $f(s)$ is safe, as it can only lead to states in $\phi$ (which does not overlap with the unsafe states). If the condition does not hold, then $f$ can violate safety, and the shield $g$ is executed in its place. In either case, $h$ is safe. As for updates to $h$, we do not assume that the policy gradient $\nabla_\HH J(h)$ in the space $\HH$ exists, and approximate it by the gradient $\nabla_\F J(h)$ in the space $\F$ of neural policies.

We sketch our learning procedure in Algorithm~\ref{alg:algo}. The algorithm starts with a (manually constructed) shield $g_0 \in \G$ and a corresponding invariant $\phi_0$, then iteratively performs the following steps.

\textbf{$\lift_\HH$.~}
This step takes as input a shield $g \in \G$ and its accompanying invariant $\phi$, and constructs the policy $(g, \phi, g) \in \HH$. Note that the neural component of this policy is just the input shield $g$ (in a neural representation). In practice, to construct this component, we can train a randomly initialized neural network to imitate $g$, using an algorithm such as \daggeralgo~\cite{dagger}. Because the safety of any policy $(g, \phi, f')$ only depends on $g$ and $\phi$, this step is safe. 


\textbf{$\update_\F$.}
This procedure performs a series of gradient updates to a neurosymbolic policy $h = (g, \phi, f)$. As mentioned earlier, this step uses the approximate policy gradient $\nabla_\F J(h)$. 
This means that after an update, the new policy is $(g, \phi, f - \eta \nabla_\F J(h))$, for a suitable learning rate $\eta$. As the update does not change $g$ and $\phi$, the new policy is provably safe. Also, we show later that, under certain assumptions,
the regret introduced by our approximation of the gradient is bounded.

\textbf{$\project_\G$.~}
This procedure implements the projection operation of mirror descent. 
Given a neurosymbolic policy $h = (g,\phi, f)$, the procedure computes a policy $g' \in \G$ that satisfies
$g' = \argmin_{g'' \in \G} D(g'', (g, \phi, f))$
for some Bregman divergence $D$. Along with $g'$, we compute an invariant $\phi'$ such that $\phi' \vdash \Safe(g')$.

The computation of $g'$ can be naturally cast as an imitation learning task with respect to the  demonstration oracle $(g, \phi, f)$. Prior work~\cite{propel,pirl} has given heuristic solutions to this problem for the case when $g''$ obeys certain syntactic constraints. In our setting, we have an additional semantic requirement: $g''$ must be provably safe. 
How to solve this problem depends on the precise definition of the class of shields $\G$. Section~\ref{sec:piecewise-linear} sketches the approach to this problem used in our implementation.


\subsection{Instantiation with Piecewise Linear Shields}\label{sec:piecewise-linear}

\begin{algorithm}[t]
	\caption{Implementation of $\project_\G$}
	\label{alg:project}
	\begin{small}
	\begin{algorithmic}[1]
		\STATE  {\bfseries Input:}  A neurosymbolic policy $h = (g, \phi, f)$ where $g = [(g_1, \chi_1), \ldots, (g_n, \chi_n)]$
		\STATE $g^* \gets g$
		\vspace{0.05in}
		\FOR{$t = 1, \ldots, T$}
		    \STATE $\psi \gets {\textsc{CuttingPlane}}(\chi_i)$ \textrm{ for heuristically selected $i$}  
		    \STATE $g_i^1 \gets \textsc{ImitateSafely}(f, g_i, \ \chi_i \land \psi)$; \qquad
		     $g_i^2 \gets \textsc{ImitateSafely}(f, g_i, \ \chi_i \land \neg \psi)$
		    \STATE $g' \gets \textsc{Split}(g, i, (g_i^1, \chi_i \land \psi), (g_i^2, \chi_i \land \neg \psi))$   \IF{$D(g', h) < D(g^*, h)$}
     \STATE $g^* \gets g'$
   \ENDIF
		\ENDFOR
		\STATE $\phi^* \gets \textsc{SafeSpace}(g^*)$
   \STATE \textbf{return} $(g^*, \phi^*)$
	\end{algorithmic}
	\end{small}
\end{algorithm}

Any attempt to implement \toolname must start by choosing a class $\G$ of shields. 
Policies in $\G$ should be sufficiently expressive to allow for good learning performance 
but also facilitate verification. 
In our implementation, we choose $\G$ to comprise \emph{deterministic, piecewise linear policies} 
of the form:

\vspace{-0.4cm}
    $$
    g(s) = \begin{cases}
    g_1(s) & \textrm{if}~\chi_1(s) \\
    g_2(s) & \textrm{if}~\chi_2(s) \wedge \neg \chi_1(s)\\
    \dots \\
    g_n(s) & \textrm{if}~\chi_n(s) \wedge (\bigwedge_{1 \leq i <n} \neg \chi_i(s)),
    \end{cases}
    $$
    where $\chi_1,\dots, \chi_n$ are linear predicates that {\em partition} the state space, and each $g_i$ is a linear function. 
We represent $g(s)$ as a list of pairs $(g_i, \chi_i)$. We refer to the subpart of the state space defined by $\chi_i \land \bigwedge_{j=1}^{i-1} \neg \chi_j$ as the \emph{region} for linear policy $g_i$ and denote this region by $\mathsf{Region}(g_i)$. 

Now we sketch our implementation of Algorithm~\ref{alg:algo}.
Since the $\lift_\HH$ and $\update_\F$ procedures are agnostic to the choice of $\G$, we focus on $\project_\G$, which seeks to find a shield $g$ at minimum imitation distance $D(g, h)$ from a given $h \in \HH$. 


Our implementation of this operation is the iterative procedure in  Algorithm~\ref{alg:project}. Here, we start with an input policy $h = (g, \phi, f)$.
In each iteration, we identify a component $g_i$ with region $\chi_i$, then perform
the following steps:
(i) Sample a {\em cutting plane} that creates a more fine-grained partitioning of the safe region, by splitting the region $\chi_i$ into two new regions $\chi_i^1$ and $\chi_i^2$. 
(ii) For each new region $\chi_i^j$, use a subroutine {\sc ImitateSafely} to construct a  safe linear policy $g_i^j$ (and a corresponding invariant) that minimizes $D(g_i^j, h)$ within the region $\chi_i^j$.
(iii) Replace $(g_i, \chi_i)$ by the two new components, leading to the creation of a 
new, refined shield $g'$. 
The procedure ends by returning the most optimal shield $g'$ (and an invariant obtained by combining the invariants of the $g_i^j$-s) constructed through this process. 
Now we sketch {\sc ImitateSafely}, which constructs safe and imitation-loss-minimizing linear policies.
By collecting state-action pairs using \daggeralgo~\cite{dagger}, we reduce the procedure's objective to a series of constrained supervised learning problems. Each of these problems is solved 
using a 
projected gradient descent (PGD) that alternates between gradient updates to a linear policy and projections into the set of safe linear policies. Critically, the constraint imposed on each of these optimization problems is constructed such that (i) the resulting policy is provably safe and (ii) the projection for the PGD algorithm is easy to compute. In our implementation these constraints take the form of a hyperinterval in the parameter space of the linear policies. We can then use abstract interpretation~\cite{cousot1977}, a common framework for program verification, to prove that every controller within a particular hyperinterval behaves safely. For more details on {\sc ImitateSafely}, see the supplementary material.

\subsection{Theoretical Analysis}
\label{sec:Theory}

The \toolname approach introduces two new sources of error over standard mirror descent. First, we approximate the gradient $\nabla_\HH$ by $\nabla_\F$, which introduces bias. 
Second, our projection step may be inexact. Prior work~\cite{propel} has studied methods for implementing the projection step with bounded error. 
Here, we bound the bias in the gradient approximation under some simplifying assumptions, and use this result to prove a regret bound on the final shield that our method converges on.
We define a safety indicator $Z$ which is zero whenever the shield is invoked and one otherwise. We assume:
\begin{enumerate}[leftmargin=15pt]
    \item $\HH$ is a vector space equipped with an inner product $\langle \cdot , \cdot \rangle$ and induced norm $\|h\| = \sqrt{\langle h , h \rangle}$,
    \item $J$ is convex in $\HH$, and 
    $\nabla J$ is $L_J$-Lipschitz continuous on $\HH$,
    \item $\HH$ is bounded (i.e., $\sup \{\|h - h'\| \mid h,h' \in \HH\} < \infty$),
    \item $\mathbb{E}[1- Z] \le \zeta$, i.e., the probability that the shield is invoked is bounded above by $\zeta$,
    \item the bias introduced in the sampling process is bounded by $\beta$, i.e., $\|\mathbb{E}[\widehat{\nabla}_\F \mid h] - \nabla_\FF J(h) \| \le \beta$, where 
    $\widehat{\nabla}_\F$ is the estimated gradient 
    \item for $s \in \states$, $a \in \actions$, and policy $h \in \HH$, if $h(a \mid s) > 0$ then $h(a \mid s) > \delta$ for some fixed $\delta > 0$.
\end{enumerate}
Intuitively, this last assumption amounts to cutting of the tails of the distribution so that no action can be arbitrarily unlikely.
Now, let the variance of the gradient estimates be bounded by $\sigma^2$, and assume the projection error $\|g_t - g_t^*\| \le \epsilon$ where $g_t^*$ is the exact projection of a neurosymbolic policy onto $\G$ and $g_t$ is the computed projection. Let $R$ be an $\alpha$-strongly convex and $L_R$-strongly smooth regularizer. Then the bias of our gradient estimate is bounded by Lemma~\ref{lem:bias} and the expected regret bound is given by Theorem~\ref{thm:regret}.

\begin{lemma}
\label{lem:bias}
  Let $\gamma$ be the diameter of $\HH$, i.e., $\gamma = \sup\{\|h - h'\| \mid h,h' \in \HH\}$. Then the bias incurred by approximating $\nabla_\HH J(h)$ with $\nabla_\FF J(h)$ and sampling is bounded by
  \[\left\|\mathbb{E}\left[\widehat{\nabla}_t \mid h\right] - \nabla_\HH J(h)\right\| = 
  O(\beta + L_j \zeta).\]
\end{lemma}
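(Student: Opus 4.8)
The plan is to split the total error into a sampling term and a gradient-class-approximation term via the triangle inequality, dispatch the former directly with Assumption~5, and control the latter using the Lipschitz continuity of $\nabla J$ together with the bound $\zeta$ on how often the shield is invoked. First I would write
$$\left\|\mathbb{E}[\widehat{\nabla}_t \mid h] - \nabla_\HH J(h)\right\| \le \left\|\mathbb{E}[\widehat{\nabla}_\F \mid h] - \nabla_\FF J(h)\right\| + \left\|\nabla_\FF J(h) - \nabla_\HH J(h)\right\|.$$
The first summand is exactly the bias introduced by sampling the gradient in the neural space, which is at most $\beta$ by Assumption~5, so it only remains to show that the second summand is $O(L_J \zeta)$.

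For that term I would introduce the pure-neural policy $\bar h \in \HH$ obtained from $h = (g,\phi,f)$ by always emitting the neural action $f(s)$, i.e.\ the policy $h$ would reduce to if the monitor never fired. By construction, $\nabla_\FF J(h)$ — computed in $\F$ by a policy-gradient estimate that treats every sampled step as a neural-mode step — coincides with the true functional gradient $\nabla_\HH J(\bar h)$ at $\bar h$. The policies $h$ and $\bar h$ are identical on every state where the monitor holds and differ only where the shield is invoked, i.e.\ where $Z = 0$. Since $\mathbb{E}[1 - Z] \le \zeta$ (Assumption~4) while the pointwise discrepancy is everywhere bounded by the diameter $\gamma$ of $\HH$ (Assumption~3), the vector $h - \bar h$ is supported on an event of probability at most $\zeta$, which I would use to conclude $\|h - \bar h\| \le \gamma \zeta$. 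Applying the $L_J$-Lipschitz continuity of $\nabla J$ (Assumption~2) then gives
$$\left\|\nabla_\FF J(h) - \nabla_\HH J(h)\right\| = \left\|\nabla_\HH J(\bar h) - \nabla_\HH J(h)\right\| \le L_J \|\bar h - h\| \le L_J \gamma \zeta = O(L_J \zeta),$$
treating the diameter $\gamma$ as a constant, and combining the two estimates yields the stated bound.

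The main obstacle is the middle step, which has two delicate points. The first is rigorously identifying the neural-space gradient $\nabla_\FF J(h)$ with the true $\HH$-gradient at the pure-neural policy $\bar h$: because $\nabla_\FF J(h)$ is produced by a computation that silently assumes the monitor always holds, one must argue it equals $\nabla_\HH J(\bar h)$ rather than some constrained or projected restriction of $\nabla_\HH J(h)$. The second is the norm estimate $\|h - \bar h\| \le \gamma \zeta$, which depends on the precise inner product inducing $\|\cdot\|$; the clean linear-in-$\zeta$ bound follows when the norm aggregates the pointwise discrepancy linearly over the (occupancy) state distribution, whereas an $L^2$-type aggregation would only give $\gamma\sqrt{\zeta}$. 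I would therefore need to confirm that the norm on $\HH$ is the one for which the support-of-measure-$\zeta$ argument is tight, as this is what ultimately produces the advertised $O(\beta + L_J \zeta)$ rather than a weaker $O(\beta + L_J\sqrt{\zeta})$.
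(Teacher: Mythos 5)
Your overall skeleton (a triangle inequality splitting the error into a sampling-bias term handled directly by the $\beta$ assumption, plus a class-approximation term $\|\nabla_\FF J(h) - \nabla_\HH J(h)\|$ to be bounded by $O(L_J\zeta)$) matches the paper's. The gap is in the middle step you yourself flag as delicate: the identification $\nabla_\FF J(h) = \nabla_\HH J(\bar h)$, where $\bar h$ is the always-neural policy, is not correct, and the paper's own computation shows why. By the policy gradient theorem, $\nabla_\FF J(h) = \mathbb{E}_{s\sim\rho_h,\, a\sim h}\left[\nabla_\FF \log h(a\mid s)\, Q^h(s,a)\right]$; the expectation is taken under the occupancy measure $\rho_h$ and the value function $Q^h$ of the \emph{blended} policy $h$, not of $\bar h$. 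The integrand vanishes on shielded states (where $h$ does not depend on $f$) and equals the full $\nabla_\HH$ integrand elsewhere, so what one actually obtains is a masked version of the gradient at $h$ itself, namely $\nabla_\FF J(h) = \mathbb{E}[Z]\,\nabla_\HH J(h) + \mathrm{Cov}\bigl(Z,\ \nabla_\HH \log h(a\mid s)\, Q^h(s,a)\bigr)$ --- not the exact gradient at the different policy $\bar h$, because $\rho_{\bar h}\ne\rho_h$ and $Q^{\bar h}\ne Q^h$. Turning your version into a proof would require a separate simulation-style argument controlling how much the occupancy measure and the $Q$-function change when $g$ is swapped for $f$ on the shielded set; that is precisely the work your one-line identification skips, and it is not supplied by the Lipschitz continuity of $\nabla J$ alone.

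The paper instead bounds the deviation term as $\|\nabla_\FF J(h) - \nabla_\HH J(h)\| \le \|\mathrm{Cov}(\cdot)\| + (1-\mathbb{E}[Z])\,\|\nabla_\HH J(h)\|$, using $1-\mathbb{E}[Z]\le\zeta$ together with $\|\nabla_\HH J(h)\| \le J_0 + D L_J$ (from Lipschitzness of $\nabla J$ plus boundedness of $\HH$) to produce the $L_J\zeta$ term, and assumption~6 (the $\delta$ floor on action probabilities) plus Popoviciu's inequality to bound the covariance. No comparison between gradients at two different policies is needed, so the second delicate point you raise --- whether $\|h-\bar h\|\le\gamma\zeta$ or only $\gamma\sqrt{\zeta}$, which as you correctly observe hinges on an unspecified relation between the abstract norm on $\HH$ and the occupancy measure --- never arises. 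That concern is well-founded as a criticism of your own route, but it becomes moot once the first step is repaired along the paper's lines.
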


\begin{theorem}
\label{thm:regret}
  Let $g_1, \ldots, g_T$ be a sequence of shields in $\G$ returned by \toolname{} and let $g^*$ be the optimal programmatic policy. Choosing a learning rate $\eta = \sqrt{\frac{1}{\sigma^2} \left( \frac{1}{T} + \epsilon\right)}$ we have the expected regret over $T$ iterations:
  \[\mathbb{E}\left[\frac{1}{T} \sum_{i=1}^T J(g_i)\right] - J(g^*) = O\left( \sigma \sqrt{\frac{1}{T} + \epsilon} + \beta + L_J \zeta \right)\]
\end{theorem}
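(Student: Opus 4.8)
The plan is to treat \toolname as an instance of stochastic mirror descent for the convex objective $J$ over the space $\HH$, and to adapt the regret analysis of \propel~\cite{propel} while inserting the two error sources specific to our setting: the gradient bias controlled by Lemma~\ref{lem:bias} and the inexact projection controlled by $\epsilon$. First I would fix the iteration as a gradient step $h_t = g_t - \eta\,\widehat{\nabla}_t$ in the geometry induced by the regularizer $R$, followed by the (inexact) projection $g_{t+1} = \project_\G(h_t)$. Because $\lift_\HH$ leaves the executed policy unchanged, $J(h_t) = J(g_t)$, so the quantity to bound is $\sum_t \left(J(g_t) - J(g^*)\right)$, where $g_t, g^* \in \G \subseteq \HH$ and convexity of $J$ over $\HH$ applies directly to these iterates.

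The core is the standard one-step descent inequality for a strongly convex regularizer. Using the three-point (law-of-cosines) identity for the Bregman divergence $D_R$ together with convexity of $J$, I would write $J(g_t) - J(g^*) \le \langle \widehat{\nabla}_t, g_t - g^*\rangle + (\text{divergence terms})$ and bound the divergence terms by $\tfrac{1}{\eta}\bigl(D_R(g^*, g_t) - D_R(g^*, g_{t+1})\bigr) + \tfrac{\eta}{2\alpha}\|\widehat{\nabla}_t\|^2$, where $\alpha$-strong convexity of $R$ supplies the last step. Summing over $t$ telescopes the divergence differences into $D_R(g^*, g_1)/(\eta T)$, a constant over $\eta T$ by the bounded-domain assumption; taking expectations converts $\mathbb{E}\|\widehat{\nabla}_t\|^2$ into the variance bound and contributes an $O(\eta\sigma^2)$ term.

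Next I would isolate the two new terms. Splitting $\langle \widehat{\nabla}_t, g_t - g^*\rangle = \langle \nabla_\HH J(g_t), g_t - g^*\rangle + \langle \widehat{\nabla}_t - \nabla_\HH J(g_t), g_t - g^*\rangle$ and conditioning on $g_t$, the cross term is bounded in magnitude by $\|\mathbb{E}[\widehat{\nabla}_t \mid g_t] - \nabla_\HH J(g_t)\| \cdot \|g_t - g^*\|$, which by Lemma~\ref{lem:bias} and boundedness of $\HH$ (diameter $\gamma$) is $O\bigl((\beta + L_J\zeta)\gamma\bigr) = O(\beta + L_J\zeta)$ per step. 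For the projection error, with an exact projection the generalized Pythagorean inequality would give $D_R(g^*, g_{t+1}) \le D_R(g^*, h_t)$; replacing the exact projection $g_t^*$ by the computed $g_t$ with $\|g_t - g_t^*\| \le \epsilon$ introduces a mismatch which I would charge, via $L_R$-strong smoothness of $R$ and the diameter $\gamma$, to a per-step additive error of order $L_R\gamma\epsilon$. Since these errors do not telescope, they accumulate to an $O(\epsilon/\eta)$ term in the time-averaged bound.

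Finally I would collect the pieces into the schematic average regret $\tfrac{1}{T}\sum_t \left(J(g_t) - J(g^*)\right) = O\bigl(\tfrac{1}{\eta T} + \tfrac{\epsilon}{\eta} + \eta\sigma^2 + \beta + L_J\zeta\bigr)$ and substitute $\eta = \sqrt{\tfrac{1}{\sigma^2}(\tfrac{1}{T}+\epsilon)}$, which balances $\tfrac{1}{\eta}(\tfrac{1}{T}+\epsilon)$ against $\eta\sigma^2$ and yields $O\bigl(\sigma\sqrt{\tfrac{1}{T}+\epsilon} + \beta + L_J\zeta\bigr)$, as claimed. The main obstacle I anticipate is the rigorous treatment of the inexact projection: turning the $\ell_2$ projection error $\epsilon$ into a Bregman-divergence error through $L_R$ and $\gamma$, and confirming that this non-telescoping contribution is exactly $O(\epsilon/\eta)$ so that the learning-rate choice cleanly recovers the $\sqrt{1/T + \epsilon}$ rate rather than degrading it. By contrast, the bias decomposition via Lemma~\ref{lem:bias} and the variance handling are comparatively routine once the mirror-descent skeleton is in place.
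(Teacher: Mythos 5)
Your proposal is correct and follows essentially the same route as the paper: the paper proves Lemma~\ref{lem:bias} to bound the total gradient bias by $O(\beta + L_J\zeta)$ and then plugs that bias into the general approximate-mirror-descent regret bound of \propel (restated as Theorem~\ref{thm:propel-regret} in the appendix), whose $\frac{L_R D^2}{\eta T} + \frac{\epsilon L_R D}{\eta} + \frac{\eta(\sigma^2 + L_J^2)}{\alpha} + \beta D$ terms are exactly the quantities you re-derive. The only difference is that you open up the black box and sketch the three-point-identity/telescoping/inexact-projection argument yourself, whereas the paper cites it; your anticipated ``main obstacle'' (charging the $\ell_2$ projection error to an $O(\epsilon/\eta)$ Bregman term) is precisely what the cited \propel theorem already handles.
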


This theorem matches the expectation that when a blended policy $h = (g, \phi, f)$ is allowed to take more actions without the shield intervening (i.e., $\zeta$ decreases), the regret bound is decreased. Intuitively, this is because when we use the shield, the action we take does not depend on the neural network $f$, so the learner does not learn anything useful. However if $h$ is using $f$ to choose actions, then we have unbiased gradient information as in standard RL.

\section{Experiments}
\label{sec:experiments}

\begin{figure}
\captionsetup[subfigure]{aboveskip=-1pt, belowskip=-2pt}
\centering
\begin{subfigure}{0.42\textwidth}
    \centering
    \includegraphics[width=\textwidth]{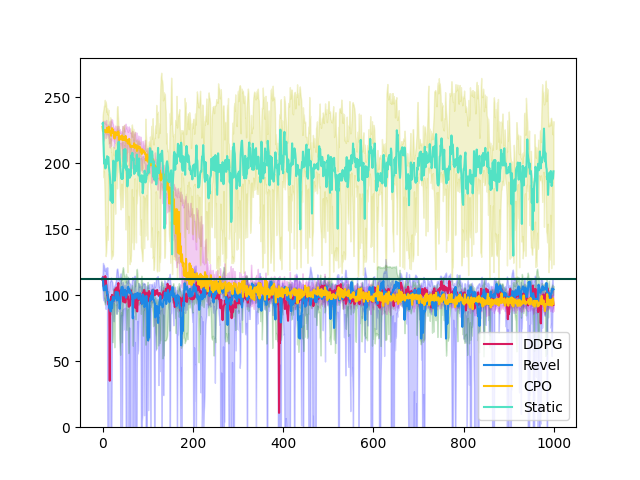}
    \caption{mountain-car}
    \label{fig:mountain_car_training}
\end{subfigure}
\begin{subfigure}{0.42\textwidth}
    \centering
    \includegraphics[width=\textwidth]{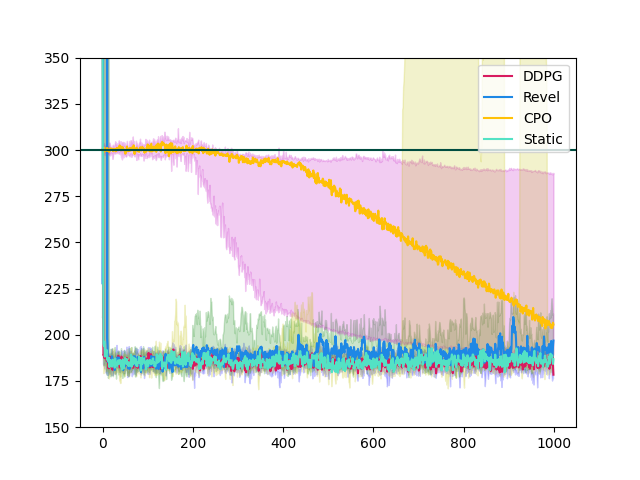}
    \caption{road}
    \label{fig:road_training}
\end{subfigure}
\begin{subfigure}{0.42\textwidth}
    \centering
    \includegraphics[width=\textwidth]{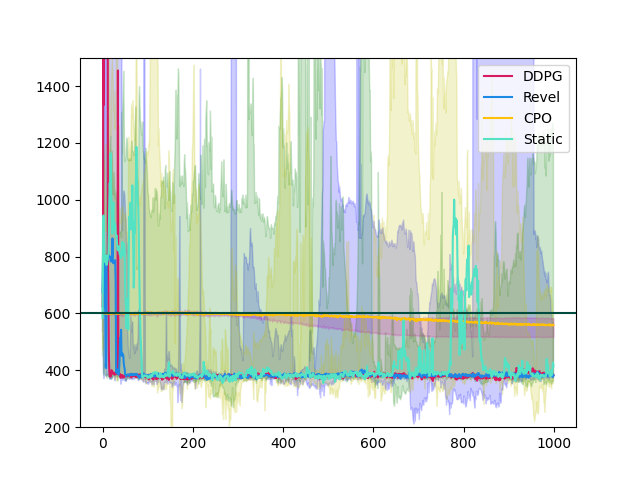}
    \caption{road-2d}
    \label{fig:road_2d_training}
\end{subfigure}
\begin{subfigure}{0.42\textwidth}
    \centering
    \includegraphics[width=\textwidth]{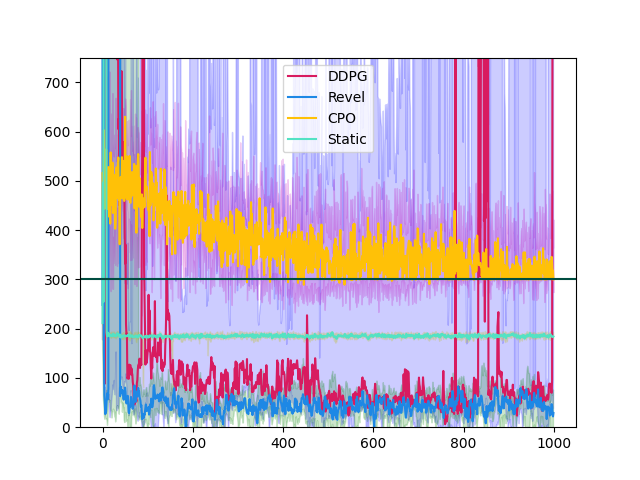}
    \caption{noisy-road}
    \label{fig:noisy_road_training}
\end{subfigure}
\begin{subfigure}{0.42\textwidth}
    \centering
    \includegraphics[width=\textwidth]{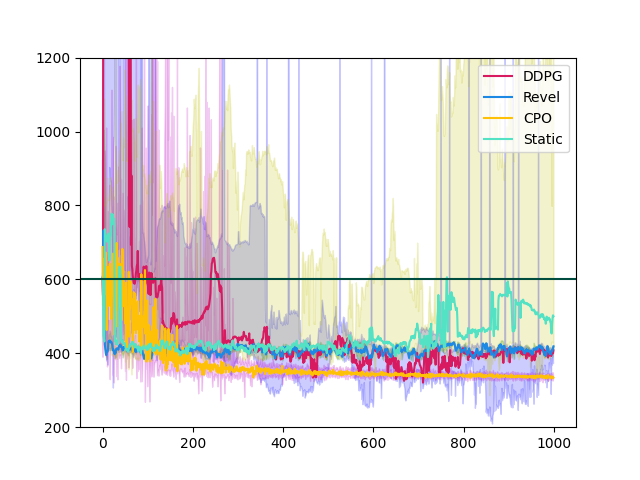}
    \caption{noisy-road-2d}
    \label{fig:noisy_road_2d_training}
\end{subfigure}
\begin{subfigure}{0.42\textwidth}
    \centering
    \includegraphics[width=\textwidth]{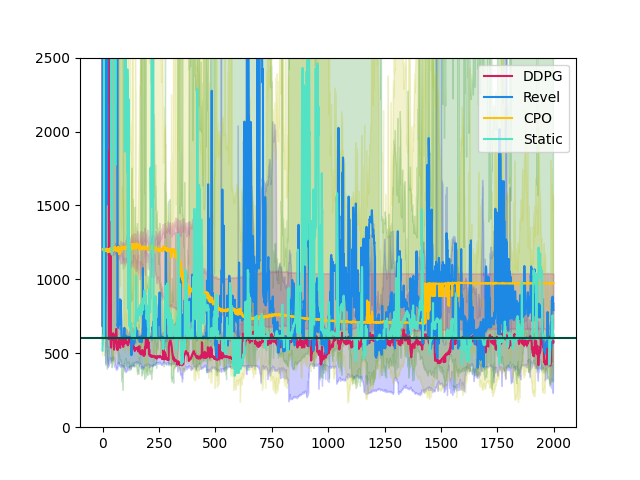}
    \caption{obstacle}
    \label{fig:obstacle_training}
\end{subfigure}
\begin{subfigure}{0.42\textwidth}
    \centering
    \includegraphics[width=\textwidth]{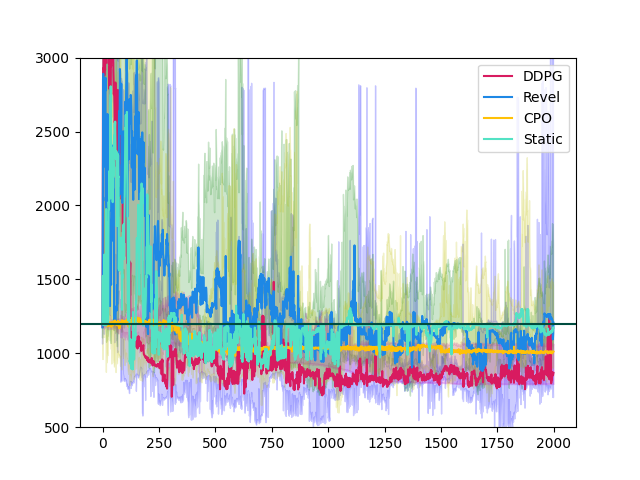}
    \caption{obstacle2}
    \label{fig:mid_obstacle_training}
\end{subfigure}
\begin{subfigure}{0.42\textwidth}
    \centering
    \includegraphics[width=\textwidth]{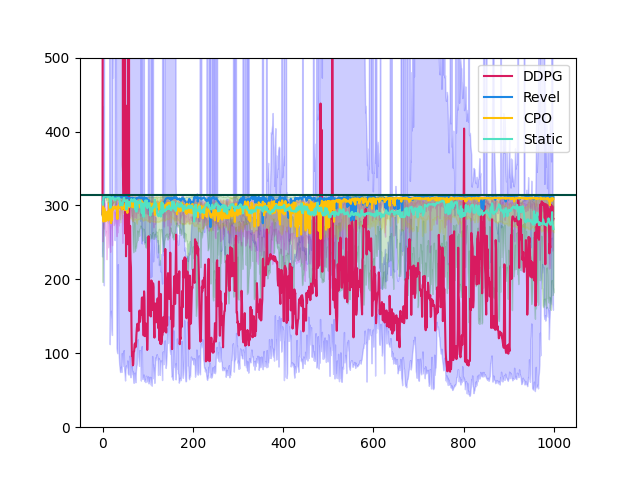}
    \caption{pendulum}
    \label{fig:pendulum_training}
\end{subfigure}
\caption{Training performance comparison on our benchmarks. The y-axis represents the Cost $J(\pi)$ and the x-axis gives the number of training episodes.}
\label{fig:curves}
\vspace{-0.25in}
\end{figure}

\begin{figure}
\captionsetup[subfigure]{aboveskip=-1pt, belowskip=-2pt}
\centering
\begin{subfigure}{0.42\textwidth}
    \centering
    \includegraphics[width=\textwidth]{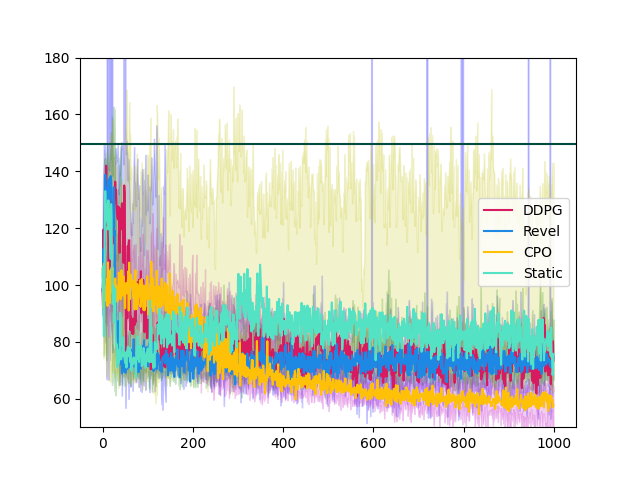}
    \caption{acc}
    \label{fig:acc_training}
\end{subfigure}
\begin{subfigure}{0.42\textwidth}
    \centering
    \includegraphics[width=\textwidth]{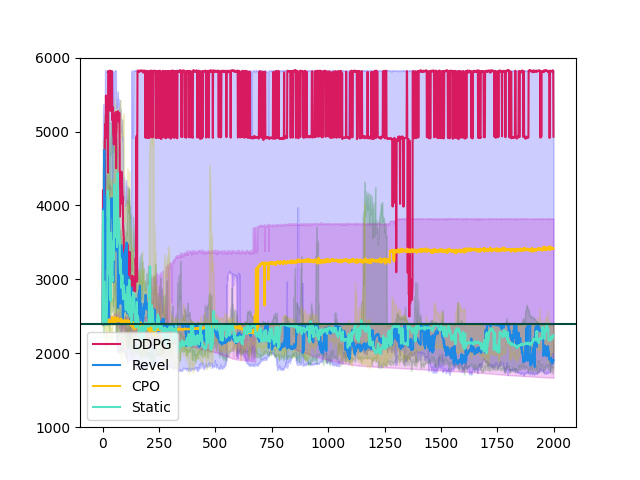}
    \caption{car-racing}
    \label{fig:car_racing_training}
\end{subfigure}
\caption{Training performance comparison (continued). The y-axis represents the Cost $J(\pi)$ and the x-axis gives the number of training episodes.}
\label{fig:curves-cont}
\vspace{-0.2in}
\end{figure}

Now we present our empirical evaluation of \toolname{}. 
We investigate two questions: 
(1) How much {safer} are \toolname{} policies compared to state-of-the-art RL techniques that lack worst-case safety guarantees? What is the   
performance penalty for this increased safety?
(2) Does \toolname offer significant performance gains over prior 
verified exploration approaches based on static shields\cite{fulton2018safe,AlshiekhBEKNT18}?

To answer these questions, we compared \toolname{} against three baselines: (1) Deep Deterministic Policy Gradients (\ddpg)~\cite{NN:DDPG};
(2) Constrained policy optimization (\cpo)~\cite{cpo_main};
and (3) a variant of \toolname{} that never updates the user-provided shield.
Of these, \cpo is designed for safe exploration and takes into account a safety cost function. For \ddpg, we engineered a reward function that has a penalty for safety violations. Details of hyperparameters that we used appear in the Appendix.

Our experiments used 10 benchmarks that include 
classic control problems, robotics applications, and benchmarks from prior work~\cite{fulton2018safe}. For each of these environments, we hand-constructed a worst-case, piecewise linear model of the dynamics. These models are based on the physics of the environment and use non-determinism to approximate nonlinear functions. For example, some of our benchmarks include trigonometric functions which cannot be represented linearly. In these cases, we define piecewise linear upper and lower bounds to the trigonometric functions. These linear approximations are necessary to make verification feasible. Each benchmark also includes a bounded-time safety property which should hold at all times during training.




\textbf{Performance.} First, we compare the policies learned using \toolname{} against policies learned using the baselines in terms of their cost (lower is better). 
Figures~\ref{fig:curves} and~\ref{fig:curves-cont} show the cost over time of the policies during training. The results suggest that:
\begin{itemize}[leftmargin=10pt]
    \item The performance of \toolname is competitive with (or even better than) DDPG for 7 out of the 10 benchmarks. \toolname achieves significantly better reward than DDPG in the ``car-racing'' benchmark, and reward is only slightly worse for 2 benchmarks. 
    \item  \toolname has better performance than CPO on 4 out of the 10 benchmarks and only performs slightly worse on 2. Furthermore, the cost incurred by CPO is significantly worse on 2 benchmarks (noisy-road and car-racing).
    \item \toolname outperforms the static shielding approach on 4 out of 10 benchmarks. Furthermore, the difference is very substantial on two of these benchmarks (noisy-road and mountain-car). 
\end{itemize}

\toolname{} does induce substantial overhead in terms of computational cost. The cost of the network updates and shield updates for each benchmark are shown in Table~\ref{tab:runtime} along with the percentage of the total time spent in shield synthesis. The ``acc'' and ``pendulum'' benchmarks stand out as having very fast shield updates. For these two benchmarks the safety properties are relatively simple, so the verification engine is able to come up with safe shields more quickly. Otherwise, \toolname{} spends the majority of its time (87\% on average) on shield synthesis.

\textbf{Safety.}
To validate whether the safety guarantee provided by \toolname{} is useful, we consider how DDPG and CPO behave during training. Specifically, Table~\ref{tab:violations} shows the average number of safety violations per run for DDPG and CPO. As we can see from this table, DDPG and CPO both exhibit safety violations in 8 out of the 10 benchmarks. In Figure~\ref{fig:safety_curves}, we show how the number of violations varies throughout the training process for a few of the benchmarks. The remaining plots are left to the supplementary material.
\begin{table}
    \centering
    \caption{Training time in seconds for network and shield updates.}
    \vspace{0.1in}
    {\small 
    \begin{tabular}{c|ccc}
         Benchmark & Network update (s) & Shield update (s) & Shield percentage \\ \hline
         mountain-car & 1900 & 5315 & 73.7\% \\
         road & 954 & 9401 & 90.8\% \\
         road-2d & 1015 & 19492 & 95.1\% \\
         noisy-road & 962 & 12793 & 93.0\% \\
         noisy-road-2d & 935 & 25514 & 96.5\% \\
         obstacle & 4332 & 27818 & 86.5\% \\
         obstacle2 & 4365 & 21661 & 83.2\% \\
         pendulum & 1292 & 113 & 8.0\% \\
         acc & 1097 & 56 & 4.9\% \\
         car-racing & 4361 & 15892 & 78.5\% \\
    \end{tabular}
    }
    \label{tab:runtime}
    \vspace{-0.2in}
\end{table}

\begin{figure}
\begin{subfigure}{0.33\textwidth}
    \centering
    \includegraphics[width=\textwidth]{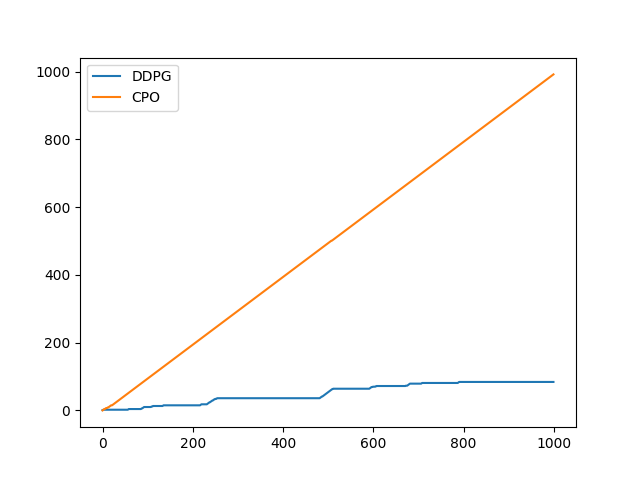}
    \caption{pendulum}
    \label{fig:pendulum_safety}
\end{subfigure}
\begin{subfigure}{0.33\textwidth}
    \centering
    \includegraphics[width=\textwidth]{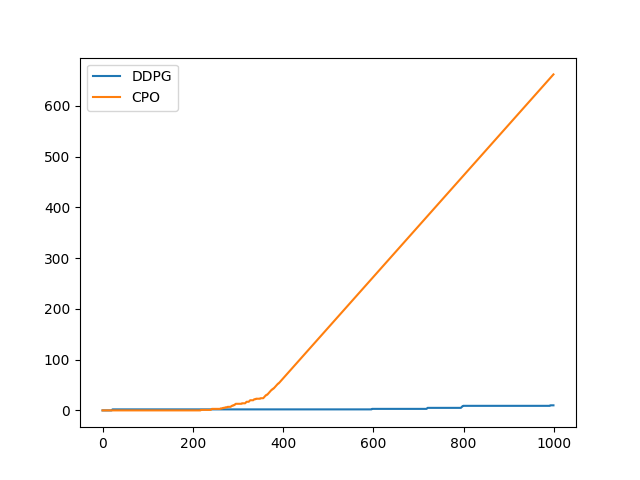}
    \caption{acc}
    \label{fig:acc_safety}
\end{subfigure}
\begin{subfigure}{0.33\textwidth}
    \centering
    \includegraphics[width=\textwidth]{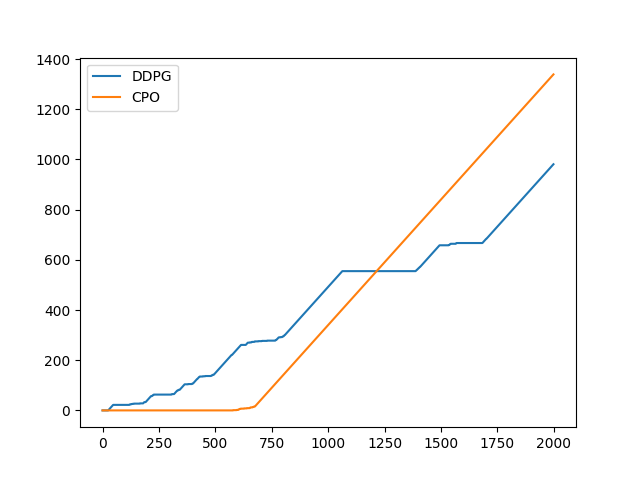}
    \caption{car-racing}
    \label{fig:car-racing_safety}
\end{subfigure}
\caption{Cumulative safety violations during training. }
\label{fig:safety_curves}
\end{figure}

\textbf{Qualitative Assessment.} To gain  intuition about the difference between policies that \toolname{} and the baselines compute, we consider trajectories from the trained policies for two of our benchmarks that are easy to visualize. 
Figure~\ref{fig:qualitative_mid_obstacle} shows the trajectories taken by each of the policies for the obstacle2 benchmark. In this environment, the policy starts in the lower left corner, and the goal is to move to the green circle in the upper right. However, the red box in the middle is unsafe.
As we can see from Figure~\ref{fig:qualitative_mid_obstacle}, all of the policies have learned to go around the unsafe region in the center. However DDPG has not reinforced this behavior enough and still enters the unsafe region at the corner. By contrast, the statically shielded policy manages to avoid the region, but there is a very clear bend in its trajectory where the shield has to step in. Revel avoids the unsafe region while maintaining a smooth trajectory throughout. In this case, CPO  also learns to avoid the unsafe region and go to the goal. (Because the environment is symmetrical, there is no significance to the CPO curve going up first and then right.)

\begin{wraptable}{l}{1.9in}
    \centering
    \caption{Safety violations.}
    {\small
    \begin{tabular}{l|cc}
      Benchmark & DDPG & CPO \\ \hline
      mountain-car & 0 & 3.6 \\
      road & 0 & 0 \\
      road-2d & 113.4 & 70.8 \\
      noisy-road & 1130.4 & 8526.4 \\
      noisy-road-2d & 107.4 & 0 \\
      obstacle & 12.4 & 1.0 \\
      obstacle2 & 96 & 118.6 \\
      pendulum & 92.4 & 9906 \\
      acc & 4 & 673 \\
      car-racing & 4956.2 & 22.4
    \end{tabular}
    }
    \label{tab:violations}
\end{wraptable}


Figure~\ref{fig:qualitative_acc} shows trajectories for ``acc'', which models an adaptive cruise control system where the goal is to follow a lead car as closely as possible without crashing into it. The lead car can apply an acceleration to itself at any time. The x-axis shows the distance to the lead car while the y-axis shows the relative velocities of the two cars. Here, all three trajectories start by accelerating to close the gap to the lead car before slowing down again. The statically shielded (and most conservative) policy is the first to slow down.
The \ddpg and \cpo policies fail to slow down soon enough or quickly enough and crash into the lead car (the red region on the right side of the figure). In contrast, the \toolname{} policy can more quickly close the gap to the lead car and slow down later while still avoiding a crash.

\begin{figure}
\begin{minipage}{0.5\textwidth}
    \centering
    \includegraphics[width=\textwidth]{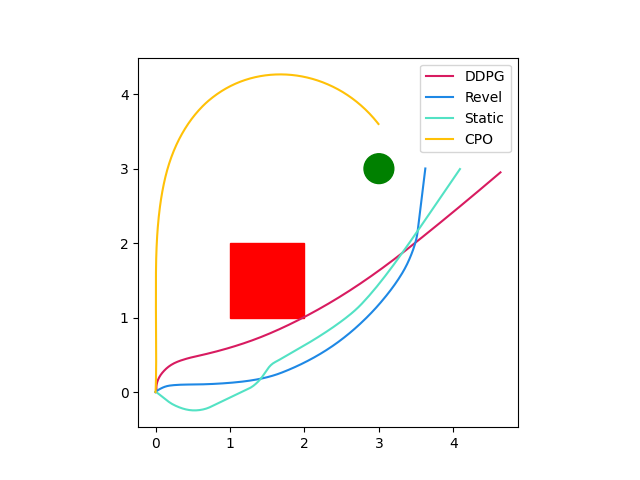}
    \caption{Trajectories for obstacle2.}
    \label{fig:qualitative_mid_obstacle}
\end{minipage}
\begin{minipage}{0.5\textwidth}
    \centering
    \includegraphics[width=\textwidth]{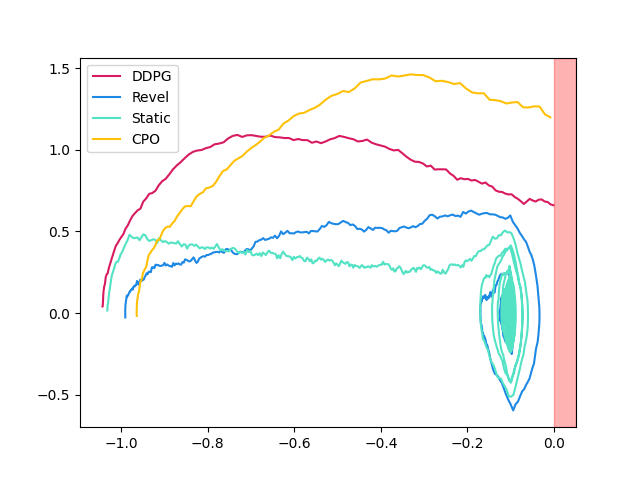}
    \caption{Trajectories for acc.}
    \label{fig:qualitative_acc}
\end{minipage}
\end{figure}




\section{Related Work}
\label{sec:Related}


There is a growing literature on safety in \rl~\cite{garcia_comprehensive_nodate}. Approaches here can be classified on basis of whether safety is guaranteed during learning or deployment. \toolname, and, for example, \cpo~\cite{cpo_main}, were designed to enforce safety during training. Another way to categorize approaches is by whether their guarantees are probabilistic (or in expectation) or worst-case. Most approaches~\cite{MoldovanA12,chow2018lyapunov,cpo_main,chow2018lyapunov} are in the former category; however, \toolname and prior approaches based on verified monitors~\cite{AlshiekhBEKNT18,fulton2018safe,fulton2019verifiably} are in the latter camp.
Now we discuss in more detail three especially related threads of work. 

\textbf{Safety via Shielding.}
These approaches rely on a definition of error states or fatal transitions to guarantee safety and have been used extensively in both RL and control theory \cite{ AkametaluKFZGT14, AlshiekhBEKNT18, ChowNDG18, fulton2018safe, fulton2019verifiably, GillulaT12, PerkinsB02, ZhuShielding}. Our approach follows this general framework, but crucially introduces a mechanism to improve the shielded-policy during training. This is achieved by projecting the neural policy onto the shielded policy space. The idea of synthesizing a shield to imitate a neural policy has been explored in recent work~\cite{viper, ZhuShielding}. However,  these approaches only generated the shield after training, so there are no guarantees about safety during training.


\textbf{Formal Verification of Neural Networks.} 
There is a growing literature on the verification of worst-case properties 
of neural networks~\cite{charon, ai2, reluplex, Marabou, reluval}.
In particular, a few recent papers~\cite{ivanov2019verisig,sun2019formal} target the verification of neural policies for autonomous agents.
However, performing such verification inside a learning loop is computationally infeasible -- in fact, state-of-the-art techniques failed to verify a single network from our benchmarks within half an hour. 


\textbf{Safety via Optimization Constraints.}
Many recent approaches to safe RL rely on specifying safety constraints as an additional cost function in the optimization objective~\cite{cpo_main, berkenkamp2017safe, dalal_safe_2018, le2019batch, li_temporal_2019}. These approaches typically provide safety up to a certain threshold by requiring that the additional cost function is kept below a certain constant. In contrast, our approach is suitable for use cases in which safety violations are completely unacceptable and where provable guarantees are required.


\section{Conclusion}
\label{sec:Conclusions}

We have presented \toolname, an \rl framework that permits formally verified exploration 
while supporting continuous action spaces and contemporary learning algorithms. Our key innovation is to cast the verified \rl problem as a form of mirror descent that uses a verification-friendly symbolic policy representation along with a neurosymbolic 
policy representation that benefits learning.

One limitation of this work is its assumption of a fixed worst-case model of the environment. Allowing this model to be updated as learning progresses~\cite{fulton2019verifiably} is a direction for future work. The development of incremental verification techniques to further allay the cost of repeated verification is another natural direction. Progress on such verification techniques can  potentially allow the use of more expressive classes of shields, which, in turn, can boost the learner's overall performance.

\newpage
\section*{Broader Impact}

In the recent past, reinforcement learning has seen numerous advances and found applications in safety-critical settings. System failures in this setting can result in significant loss of property or even loss of life. This work takes a step towards solving  this problem by guaranteeing that RL agents do not violate safety properties. 

As with any safety-related work, the consequences of failure or misuse of this technique can be severe. Specifically, there is a risk that a user might assume that their system is guaranteed safe when this is not the case (for example, if the user fails to adequately specify the environment or safety property). Writing correct safety specifications is known to be hard, so inexpert users may feel an unwarranted sense of security. While misuse of the tool carries great risk, proper use can confer substantial advantages. In particular, it may allow the benefits of RL to be brought to domains, such as robotics and autonomous vehicles, where failure has a very high cost.


\section*{Funding Acknowledgment} 

This work was supported in part by United States Air Force Contract \# FA8750-19-C-0092, ONR award \# N00014-20-1-2115, NSF Awards \# CCF-2033851, \# CCF-SHF-1712067, \# CCF-SHF-1901376, and \# CNS-CPS-1646522, and a JP Morgan Chase Fellowship (for Verma).

\begin{small}
\bibliographystyle{plain}
 \bibliography{main}
\end{small}

\newpage
\appendix

\section{Safely Imitating a Neural Policy}
In this section, we describe our projection algorithm for piecewise linear policies in more detail. Algorithm~\ref{alg:project} defines this operation at a high-level, but leaves out some of the details of the \textsc{ImitateSafely} subroutine. The role of \textsc{ImitateSafely} is to learn a linear policy which is provably safe on some region and behaves as similarly to the neural controller as possible on that region. Since linear policies are differentiable, we adopt a projected gradient descent approach. To formalize this, we note that a linear policy $g$ is just a matrix in $\mathbb{R}^{n \times m}$ where $n$ is the dimension of the action space and $m$ is the dimension of the state space. We will use $\theta_g$ to refer to a vector in $\mathbb{R}^{nm}$ parameterizing $g$.

\begin{algorithm}
\caption{Safely imitating a network using a given starting point and partition.}
\label{alg:imitate}
\begin{algorithmic}[1]
\STATE  {\bfseries Input:} A neural policy $f$, a region $\chi$, and a linear policy $g$.
   \STATE $g^* \gets g$
   \WHILE{*}
     \STATE $S \gets {\textsc{ComputeSafeRegion}}(g^*, \chi)$
     \STATE $g^* \gets g^* - \alpha \nabla D(g^*, N)$
     \STATE $g^* \gets \textit{proj}_S(g^*)$
   \ENDWHILE
 \STATE \textbf{return} $g^*$
 \end{algorithmic}
\end{algorithm}

Now our safe imitation algorithm is described in Algorithm~\ref{alg:imitate}. In each iteration, we first compute a safe region in the parameter space of $g^*$ over the region $\chi$. This is done by starting with a region bigger than the gradient step size and then iteratively searching for unsafe controllers and trimming the region to remove them. The returned region $S \subseteq \mathbb{R}^{n \times m}$ contains $g^*$ and only contains safe policies over the region $\chi$. This trimming process continues until $S$ can be verified using abstract interpretation~\cite{cousot1977}. In our implementation $S$ represents an interval in $\mathbb{R}^{nm}$ constraining $\theta_g$. Next, we take a gradient step according to the imitation loss $D$. For example $D$ may be computed using a DAgger-like algorithm to gather a dataset for supervised learning. Finally, we project $g^*$ into the safe region $S$ computed earlier. Specifically, this means projecting $\theta_{g^*}$ into the region of $\mathbb{R}^{nm}$ represented by $S$. Notice that since we project back into $S$ after each iteration, the policy returned by \textsc{ImitateSafely} is known to be safe on $\chi$.

Intuitively, recomputing $S$ at each iteration allows the controller to learn behavior which is more different from the starting point $g$ than would otherwise be possible. This is because computing a safe region involves abstracting the behavior of the system and in general it is intractable to compute the entire safe of safe policies in advance. Recomputing the safe space using the current controller at each step means we only need to prove the safety of a relatively small piece of the policy space local to the current controller. Specifically, if we can verify a region at least as large as one gradient step then the gradient descent procedure is unconstrained for that step. By repeating this process at each step, we only end up needing to verify a thin strip of policies surrounding the trajectory the gradient descent algorithm takes through the policy space.
\section{Theoretical Analysis}
Here we provide proofs of the theoretical results from Section~\ref{sec:Theory} and extend the discussion of a few theoretical issues.

Recall from Section~\ref{sec:Theory} that we require the policy space $\HH$ to be a vector space equipped with an inner product $\langle \cdot , \cdot \rangle$ inducing a norm $\|h\| = \sqrt{\langle h , h \rangle}$. Addition and scalar multiplication are defined in the standard way, i.e., $(\lambda u + \kappa v)(s) = \lambda u(s) + \kappa v(s)$. The cost functional of a policy $u$ is defined as $J(u) = \int_\mathcal{S} c(s, u(s)) \mathrm{d}\mu^u(s)$ where $\mu^u$ is the state distribution induced by $u$. We assume that $\G$ and $\FF$ are subspaces of $\HH$ so that there is a well-defined notion of distance between policies in these classes. Additionally, notice that while a policies in $\G$ may not be differentiable in terms of their programmatic representation, they may still be differentiable when viewed as points in the ambient space $\HH$. We will assume $\HH$ is parameterizable by a vector in $\mathbb{R}^N$ for some $N$.

We will make use of a few standard notions from functional analysis, restated here for convenience:



\begin{definition}
\label{def:strong-conv}
  (Strong convexity) A differentiable function $R$ is $\alpha$-strongly convex w.r.t.\ a norm $\|\cdot\|$ if $R(y) \ge R(x) + \langle \nabla R(x) , y - x \rangle + \frac{\alpha}{2} \|y - x\|^2$.
\end{definition}

\begin{definition}
\label{def:lipschitz-grad}
  (Lipschitz continuous gradient smoothness) A differentiable function $R$ is $L_R$-strongly smooth w.r.t.\ a norm $\|\cdot\|$ if $\|\nabla R(x) - \nabla R(y)\|_* \le L_R \|x - y\|$.
\end{definition}

\begin{definition}
\label{def:bregman}
  (Bregman divergence) For a strongly convex regularizer $R$, $D_R(x, y) = R(x) - R(y) - \langle \nabla R(y) , x - y \rangle$ is the Bregman divergence between $x$ and $y$. Note that $D_R$ is not necessarily symmetric.
\end{definition}



With these preliminaries, we can now prove Theorem~\ref{thm:regret} from Section~\ref{sec:Theory}. The high-level strategy for this proof will be to prove Lemma~\ref{lem:bias}, and then combine this result with a more general regret bound from~\cite{propel}. First we restate the general theorem below. Let $R$ be an $\alpha$-strongly convex and $L_R$-smooth functional w.r.t.\ the norm $\|\cdot\|$ on $\HH$. Additionally let $\nabla_\HH$ be a Fr\'{e}chet gradient on $\HH$. Then our algorithm can be described as follows: start with $g_0 \in \G$ (provided by the user) then for each iteration $t$:
\begin{enumerate}
    \item Compute a noisy estimate of the gradient $\widehat{\nabla} J(g_{t-1}) \approx \nabla J(g_{t-1})$.
    \item Update in $\HH$: $\nabla R(h_t) = \nabla R(h_{t-1}) - \eta \widehat{\nabla} J(g_{t-1})$.
    \item Perform an approximate projection $g_t = \proj_{\G}^R (h_t) \approx \argmin_{g \in \G} D_r(g, h_t)$.
\end{enumerate}

This procedure is approximate functional mirror descent under bandit feedback. We let $D$ be the diameter of $\G$, i.e., $D = \sup\{\|g - g'\| \mid g, g' \in \G\}$. $L_J$ is the Lipschitz constant of the functional $J$ on $\HH$. $\beta$ and $\sigma^2$ are bounds on the bias and variance, respectively,  of the gradient estimate in each iteration. $\alpha$ and $L_R$ are the strongly convex and smooth coefficients of the functional regularizer $R$. Finally $\epsilon$ is the bound on the projection error with respect to the same norm $\|\cdot\|$. We will make use of the following general result:

\begin{theorem} \cite{propel}
  Let $g_1, \ldots, g_T$ be a sequence of programmatic policies returned by $\toolname{}$ and $g^*$ be the optimal programmatic policy. We have the expected regret bound
  \[\mathbb{E}\left[\frac{1}{T} \sum_{t=1}^T J(g_t)\right] - J(g^*) \le \frac{L_R D^2}{\eta T} + \frac{\epsilon L_R D}{\eta} + \frac{\eta (\sigma^2 + L_J^2)}{\alpha} + \beta D.\]
  In particular, choosing $\eta = \sqrt{(1/T + \epsilon) / \sigma^2}$, this simplifies to
  \[\mathbb{E}\left[\frac{1}{T} \sum_{t = 1}^T J(g_t)\right] - J(g^*) = O\left(\sigma \sqrt{\frac{1}{T} + \epsilon} + \beta\right).\]
  \label{thm:propel-regret}
\end{theorem}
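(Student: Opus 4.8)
The plan is to prove this as a standard stochastic functional mirror descent regret bound, adapted to tolerate three simultaneous sources of error: the bias $\beta$ and variance $\sigma^2$ of the gradient estimate $\widehat{\nabla} J$, and the inexact projection with $\|g_t - g_t^*\| \le \epsilon$. The three structural facts I would lean on are convexity of $J$ on $\HH$ (to convert instantaneous regret into a gradient inner product), the three-point identity for the Bregman divergence $D_R$ (Definition~\ref{def:bregman}), and the two-sided control coming from $\alpha$-strong convexity and $L_R$-smoothness of $R$ (Definitions~\ref{def:strong-conv} and~\ref{def:lipschitz-grad}), namely $\frac{\alpha}{2}\|x-y\|^2 \le D_R(x,y) \le \frac{L_R}{2}\|x-y\|^2$ together with the dual bound $D_R(x,y) \le \frac{1}{2\alpha}\|\nabla R(x) - \nabla R(y)\|_*^2$. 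Since the statement is quoted from \propel, the substance of the argument is exactly this mirror-descent bookkeeping.

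First I would bound the per-step regret. By convexity of $J$, $J(g_t) - J(g^*) \le \langle \nabla J(g_t), g_t - g^* \rangle$, and I split the right-hand side into the estimated part $\langle \widehat{\nabla} J(g_t), g_t - g^* \rangle$ plus the bias part $\langle \nabla J(g_t) - \widehat{\nabla} J(g_t), g_t - g^* \rangle$. Taking expectations, the bias part is handled by Cauchy--Schwarz and the bias assumption as $\le \beta\,\|g_t - g^*\| \le \beta D$, which accounts exactly for the $\beta D$ term in the bound.

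Next I would turn the estimated part into a telescoping Bregman sum. The dual update $\nabla R(h_{t+1}) = \nabla R(h_t) - \eta\,\widehat{\nabla} J(g_t)$ gives $\eta\,\widehat{\nabla} J(g_t) = \nabla R(h_t) - \nabla R(h_{t+1})$, so $\eta\,\langle \widehat{\nabla} J(g_t), g_t - g^* \rangle = \langle \nabla R(h_t) - \nabla R(h_{t+1}), g_t - g^* \rangle$. Applying the three-point identity for $D_R$ rewrites this inner product as a combination of Bregman divergences in which the pieces anchored at $g^*$, of the form $D_R(g^*, h_t) - D_R(g^*, h_{t+1})$, telescope across $t$, leaving a single boundary term $D_R(g^*, h_1) = O(L_R D^2)$; dividing by $\eta T$ yields the $\frac{L_R D^2}{\eta T}$ term. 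The remaining local terms take the form $D_R(h_t, h_{t+1})$, which the dual characterization of strong convexity bounds by $\frac{1}{2\alpha}\|\nabla R(h_t) - \nabla R(h_{t+1})\|_*^2 = \frac{\eta^2}{2\alpha}\|\widehat{\nabla} J(g_t)\|_*^2$; taking expectations with $\mathbb{E}\|\widehat{\nabla} J(g_t)\|_*^2 \le \sigma^2 + L_J^2$ (variance plus the squared Lipschitz bound $\|\nabla J\| \le L_J$) and dividing by $\eta$ produces the $\frac{\eta(\sigma^2 + L_J^2)}{\alpha}$ term.

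The hard part, and where the inexact projection enters, is the discrepancy between the computed projection $g_t$ and the exact minimizer $g_t^*$. With an exact projection the telescoping above closes cleanly, but here each feasible iterate is perturbed by at most $\epsilon$ in norm, and I must show this perturbation cannot compound across iterations. I would propagate it through the regret identity by invoking $L_R$-smoothness of $R$ to convert the norm error $\|g_t - g_t^*\| \le \epsilon$ into a bounded perturbation of the Bregman terms, then control the resulting inner product by Cauchy--Schwarz against the diameter $D$; this contributes at most $O(\epsilon L_R D)$ per telescoping block and hence $\frac{\epsilon L_R D}{\eta}$ after the overall scaling. Summing the four contributions ($\frac{L_R D^2}{\eta T}$, $\frac{\epsilon L_R D}{\eta}$, $\frac{\eta(\sigma^2 + L_J^2)}{\alpha}$, and $\beta D$) and dividing by $T$ gives the first displayed inequality; substituting $\eta = \sqrt{(1/T + \epsilon)/\sigma^2}$ and retaining the dominant terms collapses it to $O\!\left(\sigma\sqrt{1/T + \epsilon} + \beta\right)$. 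The only genuinely delicate step is this last one: ensuring the $\epsilon$-error does not accumulate, which is what justifies treating the approximate projection as a benign additive perturbation rather than a source of drift.
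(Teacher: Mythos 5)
Your sketch is, in substance, the proof from the cited source rather than anything proved in this paper: the paper imports Theorem~\ref{thm:propel-regret} verbatim from \cite{propel} with no proof of its own, and the argument given there is exactly the inexact stochastic mirror-descent bookkeeping you lay out --- convexity plus Cauchy--Schwarz against the diameter for the $\beta D$ term, dual-update telescoping of Bregman divergences for the $\frac{L_R D^2}{\eta T}$ term, the strong-convexity dual bound with $\mathbb{E}\|\widehat{\nabla} J(g_t)\|_*^2 = O(\sigma^2 + L_J^2)$ for the $\frac{\eta(\sigma^2 + L_J^2)}{\alpha}$ term, and $L_R$-smoothness of $R$ to convert the per-iteration projection error $\|g_t - g_t^*\| \le \epsilon$ into an additive $O(\epsilon L_R D)$ perturbation of the telescoped Bregman terms that does not compound. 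Your proposal is correct at the level of a sketch and takes essentially the same route as the source, so nothing further is needed.
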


Now we restate and prove Lemma~\ref{lem:bias} from the main paper to provide a bound on the bias of our gradient estimate. Recall our definition of the immediate safety indicator $Z$ as zero if the shield is invoked and one otherwise. Recall the assumptions from Section~\ref{sec:Theory}:
\begin{enumerate}
    \item $J$ is convex in $\HH$ and 
    $\nabla J$ is $L_J$-Lipschitz continuous on $\HH$,
    \item $\HH$ is bounded (i.e., $\sup \{\|h - h'\| \mid h,h' \in \HH\} < \infty$),
    \item $\mathbb{E}[1- Z] \le \zeta$, i.e., the probability that the shield is invoked is bounded above by $\zeta$,
    \item the bias introduced in the sampling process is bounded by $\beta$, i.e., $\|\mathbb{E}[\widehat{\nabla}_\F \mid h] - \nabla_\FF J(h) \| \le \beta$, where 
    $\widehat{\nabla}_\F$ is the estimated gradient 
    \item for $s \in \states$, $a \in \actions$, and policy $h \in \HH$, if $h(a \mid s) > 0$ then $h(a \mid s) > \delta$ for some fixed $\delta > 0$.
\end{enumerate}
Under these assumptions:

\begin{lemma*}
  Let $D$ be the diameter of $\HH$, i.e., $D = \sup\{\|h - h'\| \mid h,h' \in \HH\}$. Then the bias incurred by approximating $\nabla_\HH J(h)$ with $\nabla_\FF J(h)$ and sampling is bounded by
  \[\left\|\mathbb{E}\left[\widehat{\nabla}_\FF \mid h\right] - \nabla_\HH J(h)\right\| = O(\beta + L_J \zeta).\]
\end{lemma*}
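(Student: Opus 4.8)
The plan is to bound the bias with a triangle inequality that separates a \emph{sampling} term from a \emph{shielding} term:
\[
\left\|\mathbb{E}[\widehat{\nabla}_\FF \mid h] - \nabla_\HH J(h)\right\|
\le
\left\|\mathbb{E}[\widehat{\nabla}_\FF \mid h] - \nabla_\FF J(h)\right\|
+ \left\|\nabla_\FF J(h) - \nabla_\HH J(h)\right\|.
\]
The first term is at most $\beta$ directly from the sampling-bias hypothesis (Assumption~4). So the entire argument reduces to showing that the shielding term, which measures the error of approximating the true functional gradient $\nabla_\HH J(h)$ by the neural-space gradient $\nabla_\FF J(h)$, is $O(L_J\zeta)$.

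First I would record the structural fact that drives the bound: the neurosymbolic policy $h=(g,\phi,f)$ executes its neural action $f(s)$ on every step where the monitor $P^\#(s,f(s))\subseteq\phi$ accepts, and overrides it with the shield action $g(s)$ only on the complementary event, which is exactly $\{Z=0\}$ and has probability at most $\zeta$ (Assumption~3). Writing $\nabla_\HH J(h)$ in likelihood-ratio (score-function) form and conditioning each per-step term on $Z$, the $\{Z=1\}$ contributions are precisely those captured by $\nabla_\FF J(h)$: there the executed action is a genuine sample of $f(\cdot\mid s)$ and $\nabla\log h(a\mid s)=\nabla\log f(a\mid s)$. On $\{Z=0\}$ the action is $g(s)$, which carries no dependence on the neural parameters, so the discrepancy $\nabla_\FF J(h)-\nabla_\HH J(h)$ is carried entirely by the $\{Z=0\}$ terms.

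It then remains to bound this residual. I would show that each per-step gradient term has magnitude bounded by a constant of order $L_J$: the score function $\nabla\log h = \nabla h / h$ is bounded because the tail-cutoff hypothesis (Assumption~5) keeps $h(a\mid s)>\delta$ whenever it is positive, and the accumulated return it multiplies is bounded using the Lipschitz and boundedness hypotheses (Assumptions~1 and~2). Since the residual is the expectation of these uniformly bounded terms restricted to $\{Z=0\}$, it is at most (the per-term bound) times $\mathbb{E}[1-Z]\le\zeta$, i.e.\ $O(L_J\zeta)$. Combined with the sampling term this yields $O(\beta+L_J\zeta)$. Equivalently, one may compare $h$ with the unshielded policy $\tilde h$ that always plays $f$: the two agree off $\{Z=0\}$, so they are $O(\zeta)$-close in $\HH$, and $L_J$-Lipschitz continuity of the gradient transfers this to an $O(L_J\zeta)$ gap between $\nabla J(h)$ and $\nabla_\FF J(h)=\nabla J(\tilde h)$.

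The step I expect to be the main obstacle is making the identification ``the $\{Z=1\}$ contribution equals $\nabla_\FF J(h)$'' fully rigorous. The monitor $P^\#(s,f(s))\subseteq\phi$ is an \emph{indicator}, so $h$ is not differentiable in $\HH$ in the naive sense: perturbing $f$ moves the switching boundary between the neural and shield branches, and a careful argument must show that these boundary effects are themselves absorbed into the $O(\zeta)$ residual (or vanish under the stated measure assumptions) rather than contributing an uncontrolled term. Justifying that conditioning on $Z$ commutes with differentiation --- and doing so with the correct \emph{linear} (rather than $\sqrt{\zeta}$) dependence on the invocation probability --- is exactly where the boundedness and tail-cutoff hypotheses must be used with care. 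Once that identification is secured, the remaining estimates are routine.
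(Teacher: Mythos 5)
Your proposal follows essentially the same route as the paper's proof: the same triangle-inequality split into a sampling term (bounded by $\beta$) and a shielding term, the same use of the policy-gradient/score-function representation with a case analysis on the indicator $Z$ (gradients agree when $Z=1$, the $\FF$-gradient term vanishes when $Z=0$), and the same use of the tail cutoff $\delta$ and the Lipschitz/boundedness assumptions to bound the per-step terms. The only divergence is in the final accounting --- the paper writes $\mathbb{E}[Z\,X]=\mathbb{E}[Z]\mathbb{E}[X]+\mathrm{Cov}(Z,X)$ and controls the covariance via Cauchy--Schwarz and Popoviciu's inequality, whereas you bound the $\{Z=0\}$-restricted expectation directly by a uniform bound times $\mathbb{E}[1-Z]\le\zeta$; your version is, if anything, the cleaner way to extract the linear dependence on $\zeta$, and your flagged concern about differentiability across the monitor's switching boundary is a real subtlety that the paper's proof does not address either.
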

\begin{proof}
First, we note that $\|\mathbb{E}[\widehat{\nabla}_\FF \mid h] - \nabla_\HH J(h)\| \le \|\mathbb{E}[\widehat{\nabla}_\FF \mid h] - \nabla_\FF J(h)\| + \|\nabla_\FF J(h) - \nabla_\HH J(h)\|$. We have already assumed that the first term is bounded by $\beta$, so we will proceed to bound the second term.

Let $h = (g, \phi, f)$ be a policy in $\HH$. By the policy gradient theorem~\cite{sutton2000policy}, we have that
\begin{equation}
\nabla_\FF J(h) = \mathbb{E}_{s \sim \rho_h, a \sim h}\left[\nabla_\FF \log h(a \mid s) Q^h(s, a) \right]
\label{eqn:pg-thm}
\end{equation}
where $\rho_h$ is the state distribution induced by $h$ and $Q^h$ is the long-term expected reward from a state $s$ and action $a$. We will omit the distribution subscript in the remainder of the proof for convenience. Now note that if $Z$ is one, then then $h(a \mid s) = f(a \mid s)$, so that in particular
\[\nabla_\FF \log h(a \mid s) Q^h(s, a) = \nabla_\HH \log h(a \mid s) Q^h(s, a).\]
On the other hand, if $Z$ is zero, then $h(a \mid s)$ is independent of $f$, and so we have
\[\nabla_\FF \log h(a \mid s) Q^h(s, a) = 0.\]
Thus, we can rewrite Equation~\ref{eqn:pg-thm} as
\begin{align}
\nabla_\FF J(h) &= \mathbb{E} \left[ Z \nabla_\HH \log h(a \mid s) Q^h(s, a)\right] \notag \\
&= \mathbb{E}[Z] \mathbb{E}\left[\nabla_\HH \log h(a \mid s) Q^h(s, a) \right] + \text{Cov}(S, \nabla_\HH \log h(a \mid s) Q^h(s, a)) \notag \\
&= \mathbb{E}[Z] \nabla_\HH J(h) + \text{Cov}(S, \nabla_\HH \log h(a \mid s) Q^h(s, a)). \label{eqn:bias-lem-mid}
\end{align}

Note that the covariance term is a vector where the $i$'th component is the covariance between $Z$ and the $i$'th component of the gradient $\nabla_\HH^i$. Then for each $i$, by Cauchy-Schwarz we have
\[|\text{Cov}(Z, \nabla_\HH^i \log h(a \mid s) Q^h(s, a))| \le \sqrt{\text{Var}(Z) \text{Var}(\nabla_\HH^i \log h(a \mid s) Q^h(s, a))}.\]
Since $Z \in \{0, 1\}$ we must have $0 \le \text{Var}[Z] \le 1$ so that
\[|\text{Cov}(Z, \nabla_\HH^i \log h(a \mid s) Q^h(s, a))| \le \sqrt{\text{Var}(\nabla_\HH^i \log h(a \mid s) Q^h(s, a))}.\]
By assumption, for every state-action pair $(a, s)$ if $(a, s)$ is in the support of $\rho_h$ then $h(a \mid s) > \delta$. We also have that $Q^h(s, a)$ is bounded (because $J$ is Lipschitz on $\HH$ and $\HH$ is bounded). Then because the gradient of the log is bounded above by one and because $\nabla_HH$ is bounded by definition, we have $\|\nabla_\HH^i \log h(a \mid s) Q^h(s, a)\|$ is bounded. Therefore by Popoviciu's inequality, $\text{Var}(\nabla_\HH^i \log h(a \mid s) Q^h(s,a))$ is bounded as well. Choose $B > \text{Var}(\nabla_\HH^i \log h(a \mid s) Q^h(s,a))$ for all $i$. Then we have $\|\text{Var}(\nabla_\HH \log h(a \mid s) Q^h(s, a))\|_{\infty} < \sqrt{B}$, and because $\HH$ is finite-dimensional, $\|\text{Var}(\nabla_\HH \log h(a \mid s) Q^h(s, a))\| < c\sqrt{B}$ for some constant $c$ for any norm $\|\cdot\|$.

Substituting this into Equation~\ref{eqn:bias-lem-mid}, we have
\[\|\nabla_\FF J(h) - \mathbb{E}[S] \nabla_\HH J(h)\| < c\sqrt{B}.\]
Then
\begin{align*}
    \|\nabla_\FF J(h) - \nabla_\HH J(h)\| &\le \|\nabla_\FF J(h) - \mathbb{E}[S] \nabla_\HH J(h)\| + \|\mathbb{E}[S] \nabla_\HH J(h) - \nabla_\HH J(h)\| \\
    &< c\sqrt{B} + \|\mathbb{E}[S] \nabla_\HH J(h) - \nabla_\HH J(h)\|.
\end{align*}
By assumption, $\nabla_\HH J(h)$ is Lipschitz and $\HH$ is bounded. Let $D$ be the diameter of $\HH$ and recall that $L_J$ is the Lipschitz constant of $\nabla_\HH J(h)$. Choose an arbitrary $h_0 \in \HH$ and let $J_0 = \nabla_\HH J(h_0)$. Then for any policy $h \in \HH$ we have $\|\nabla_\HH J(h)\| \le J_0 + D L_J$. Then
\begin{align*}
\|\mathbb{E}[Z] \nabla_\HH J(h) - \nabla_\HH J(h)\| &= \|(\mathbb{E}[Z] - 1) \nabla_\HH J(h)\| \\
&= |\mathbb{E}[Z] - 1| \|\nabla_\HH J(h)\| \\
&\le |\mathbb{E}[Z] - 1| (J_0 + DL).
\end{align*}
Since $Z$ is an indicator variable, we have $0 \le \mathbb{E}[Z] \le 1$ so that $|\mathbb{E}[Z] - 1| = 1 - \mathbb{E}[Z]$. Then finally we assume $D$ is a known constant to simplify presentation, and arrive at
\[\|\nabla_\FF J(h) - \nabla_\HH J(h)\| < c \sqrt{B} + (1 - \mathbb{E}[Z])(J_0 + D L_J) = O(L_J \zeta)\]
and plugging this back into the original triangle inequality we have
\[\left\|\mathbb{E}\left[ \widehat{\nabla}_\FF \mid h \right] - \nabla_\HH J(h)\right\| = O(\beta + L_J \zeta).\]

\end{proof}
Now Theorem~\ref{thm:regret} follows directly by plugging this bound on gradient estimate bias into Theorem~\ref{thm:propel-regret}.

\section{Experimental Data and Additional Results}
In this section we provide more details about our experiments along with additional results.

First, we give a qualitative description of each benchmark:
\begin{itemize}
    \item mountain-car is a continuous version of the classic mountain car problem. In this environment the goal is to move an underpowered vehicle up a hill by rocking back and forth in a valley to build up momentum. The safety property asserts that the car does not go over the crest of the hill on the left.
    \item road, road-2d, noisy-road, and noisy-road-2d are all variants of an autonomous car control problem. In each case, the car's goal is to move to a specified end position while obeying a given speed limit. The noisy variants introduce noise in the environment, while the 2d variants involve moving in two dimensions to reach the goal.
    \item In obstacle and obstacle2, a robot moving in 2D space must reach a goal position while avoiding an obstacle. In obstacle this obstruction is placed off to the side so it only affects the agent during exploration (but the shortest path to the goal does not intersect it). In obstacle2, the obstruction is placed between the starting position and the goal so that the policy must learn to move around it (see Figure~\ref{fig:qualitative_mid_obstacle}).
    \item pendulum is a classic pendulum environment where the system must swing a pendulum up until it is vertical. The safety property in this case is a bound on the angular velocity of the pendulum.
    \item acc is an adaptive cruise control benchmark taken from~\cite{fulton2018safe} and modified to use a continuous action space. Here the goal is to follow a lead car as closely as possible without crashing into it. At each time step the lead car chooses an acceleration at random (from a truncated normal distribution) to apply to itself.
    \item car-racing is similar to obstacle2 except that in this case the goal is to reach a goal state on the opposite side of the obstacle and then come back. This requires the agent to complete a loop around the obstacle.
\end{itemize}
For each benchmark, we consider a bounded-time variant of the desired safety property. That is, for some fixed $T$ we guarantee that a policy $h = (g, \phi, f)$ cannot violate the safety property within $T$ time steps starting from any state satisfying $\phi$.

For most benchmarks, we train for 100,000 environment interactions with a maximum episode length of 100. For mountain-car we use a maximum episode length of 200 and 200,000 total environment interactions. For obstacle, obstacle2, and car-racing we use an episode length of 200 with 400,000 total environment interactions. For every benchmark we synthesize five new shields at even intervals throughout training. To evaluate CPO we use the implementation provided with the Safety Gym repository~\cite{Ray2019}. To account for our safety critical benchmarks, we reduce the tolerance for safety violations in this implementation by lowering the corresponding hyper-parameter from $25$ to $1$. For DDPG, we use an implementation from prior work~\cite{ZhuShielding}, which is also what we base the code for \toolname{} on. We ran each experiment with five independent, randomly chosen seeds. Note that the chosen number of training episodes was enough for the baselines to converge, in the sense that over the last 25 training episodes we see less than a 2\% improvement in policy performance.



We now provide more details about the safety violations seen during training. The plots in Figure~\ref{fig:safety-curves-extra} show the number of safety violations over time for DDPG and CPO. This figure is the same as Figure~\ref{fig:safety_curves} except that it shows information for every benchmark.
\begin{figure}
\begin{subfigure}{0.33\textwidth}
    \centering
    \includegraphics[width=0.95\textwidth]{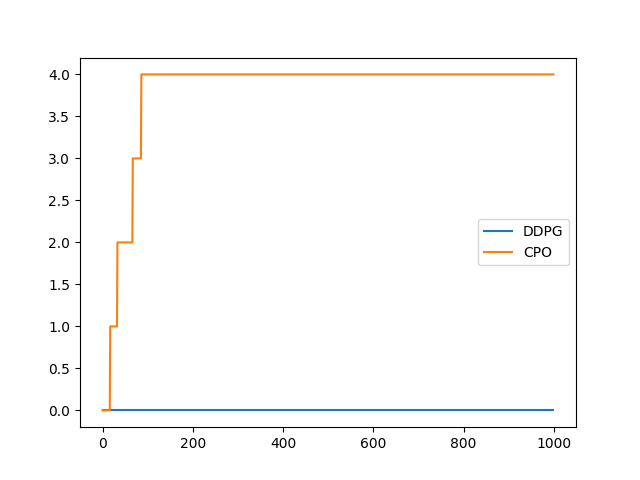}
    \caption{mountain-car}
    \label{fig:mountain_car_training_safety}
\end{subfigure}
\begin{subfigure}{0.33\textwidth}
    \centering
    \includegraphics[width=0.95\textwidth]{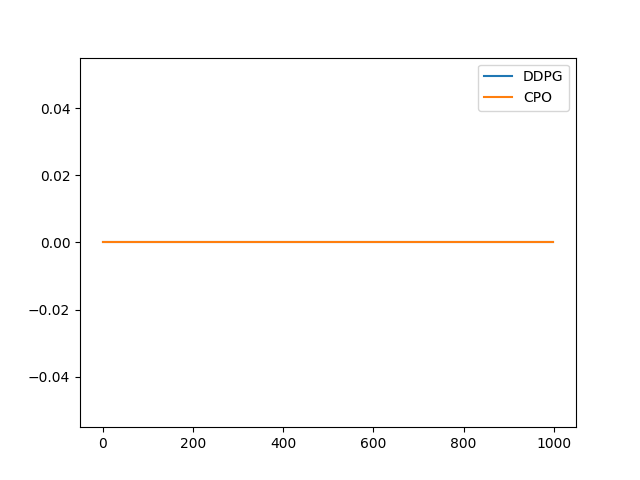}
    \caption{road}
    \label{fig:road_training_safety}
\end{subfigure}
\begin{subfigure}{0.33\textwidth}
    \centering
    \includegraphics[width=0.95\textwidth]{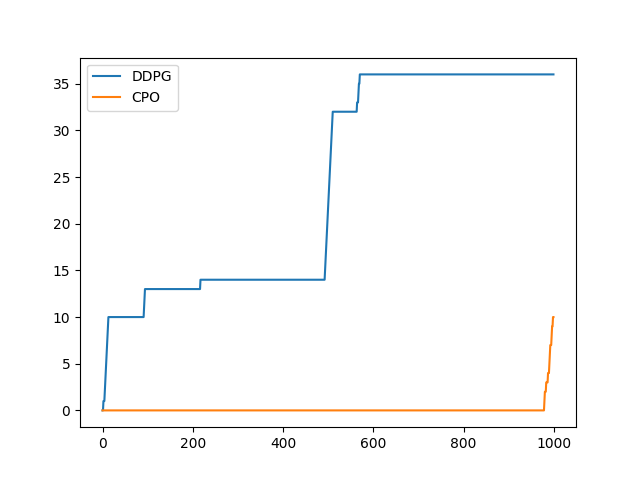}
    \caption{road-2d}
    \label{fig:road_2d_training_safety}
\end{subfigure}
\begin{subfigure}{0.33\textwidth}
    \centering
    \includegraphics[width=0.95\textwidth]{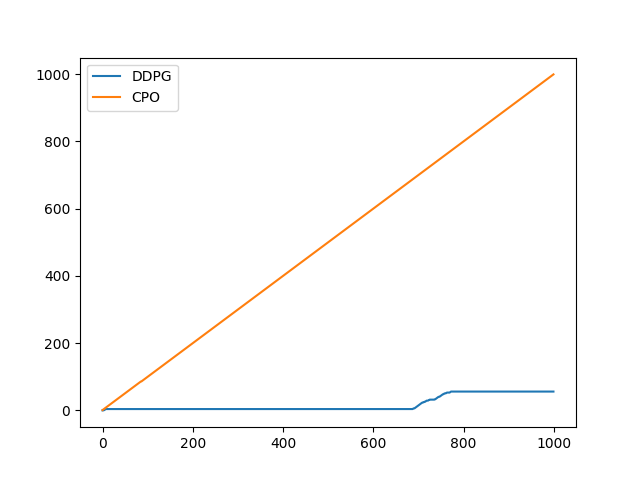}
    \caption{noisy-road}
    \label{fig:noisy_road_training_safetu}
\end{subfigure}
\begin{subfigure}{0.33\textwidth}
    \centering
    \includegraphics[width=0.95\textwidth]{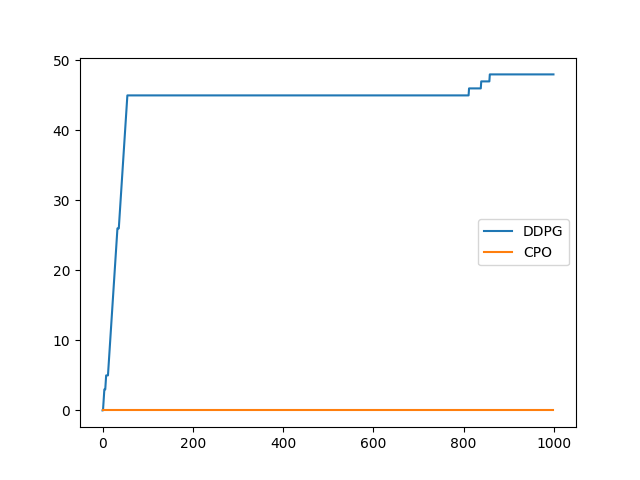}
    \caption{noisy-road-2d}
    \label{fig:noisy_road_2d_training_safety}
\end{subfigure}
\begin{subfigure}{0.33\textwidth}
    \centering
    \includegraphics[width=0.95\textwidth]{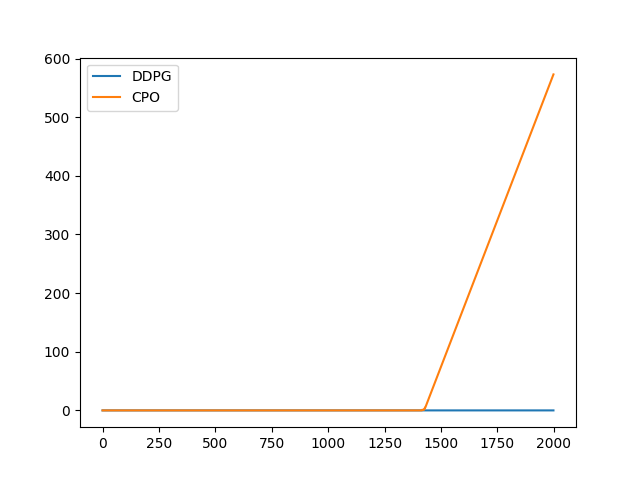}
    \caption{obstacle}
    \label{fig:obstacle_training_safety}
\end{subfigure}
\begin{subfigure}{0.33\textwidth}
    \centering
    \includegraphics[width=0.95\textwidth]{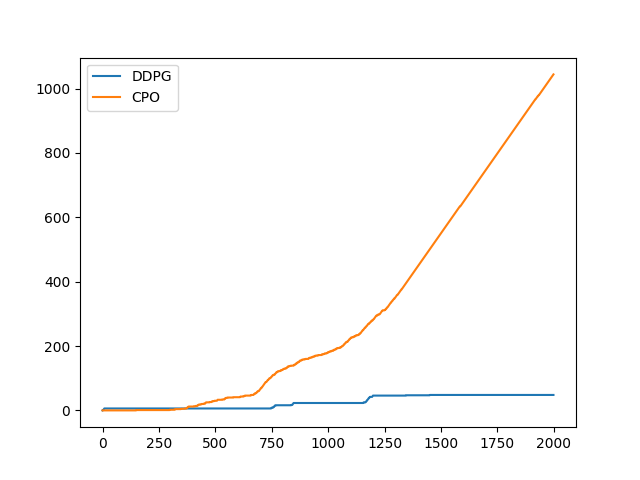}
    \caption{obstacle2}
    \label{fig:mid_obstacle_training_safety}
\end{subfigure}
\begin{subfigure}{0.33\textwidth}
    \centering
    \includegraphics[width=0.95\textwidth]{pendulum_safety_new.png}
    \caption{pendulum}
    \label{fig:pendulum_training_safety}
\end{subfigure}
\begin{subfigure}{0.33\textwidth}
    \centering
    \includegraphics[width=0.95\textwidth]{acc_safety_new.png}
    \caption{acc}
    \label{fig:acc_training_safety}
\end{subfigure}
\begin{subfigure}{0.33\textwidth}
    \centering
    \includegraphics[width=0.95\textwidth]{car-racing_safety_new.png}
    \caption{car-racing}
    \label{fig:car_racing_training_safety}
\end{subfigure}
\caption{Safety violations over time.}
\label{fig:safety-curves-extra}
\end{figure}

\end{document}